\documentclass[11pt]{article}
\usepackage{hyperref}

\usepackage[final]{acl}

\usepackage[table]{xcolor}
\usepackage{booktabs}
\usepackage{tcolorbox}
\usepackage{enumitem}
\usepackage{times}
\usepackage{latexsym}
\usepackage{multirow}
\usepackage{fontawesome5}
\usepackage{dblfloatfix}
\usepackage[T1]{fontenc}
\usepackage[utf8]{inputenc}
\usepackage{url}
\usepackage{microtype}
\usepackage{amsthm}
\newtheorem{theorem}{Theorem}
\usepackage{amsmath}
\usepackage{amsfonts}
\usepackage{inconsolata}
\usepackage{graphicx}
\usepackage{subfig}
\usepackage{pifont}
\usepackage{xspace}

\definecolor{lightblue}{rgb}{0.796, 0.894, 0.9808}
\definecolor{naturegreen}{RGB}{0,102,85}
\definecolor{naturegray}{RGB}{248,248,248}
\definecolor{crimson}{rgb}{0.863,0.078,0.235}

\newcommand{\ours}{\textsc{StructLoRA}\xspace}

\title{Not All Directions Matter: Towards Structured and Task-Aware\\Low-Rank Model Adaptation}

\author{
  \textbf{Xi Xiao\textsuperscript{1}}\thanks{\ Equal contribution.},
 \textbf{Chenrui Ma\textsuperscript{2}}\footnotemark[\value{footnote}],
 \textbf{Yunbei Zhang\textsuperscript{3}}\footnotemark[\value{footnote}],
 \textbf{Chen Liu\textsuperscript{4}},
 \textbf{Zhuxuanzi Wang\textsuperscript{1}},\\
 \textbf{Yanshu Li\textsuperscript{5}},
 \textbf{Lin Zhao\textsuperscript{6}},
 \textbf{Guosheng Hu\textsuperscript{7}},
 \textbf{Tianyang Wang\textsuperscript{1}}\thanks{\ Corresponding authors.},
 \textbf{Hao Xu\textsuperscript{8}}\footnotemark[\value{footnote}]
\\
{\small
 \textsuperscript{1}University of Alabama at Birmingham,
 \textsuperscript{2}University of Virginia,
 \textsuperscript{3}Tulane University,}
\\
{\small
 \textsuperscript{4}Yale University,
 \textsuperscript{5}Brown University,
 \textsuperscript{6}Northeastern University,
 \textsuperscript{7}University of Bristol,
 \textsuperscript{8}Harvard University
}\\
 \small{Please direct correspondence to
\url{haxu@bwh.harvard.edu} or \url{tw2@uab.edu}.}
 \\
 \small{
 \faGithub\ Project page \url{https://xixiaouab.github.io/StructLoRA/}.}
}

\begin{document}

\maketitle

\begin{abstract}
Low-Rank Adaptation~(LoRA) has become a cornerstone of parameter-efficient fine-tuning~(PEFT). Yet, its efficacy is hampered by two fundamental limitations: \textit{semantic drift}, arising from treating all update directions with equal importance, and \textit{structural incoherence}, due to adapting layers independently, resulting in uncoordinated and suboptimal updates. To address these issues, we propose \ours, a framework that tackles both limitations through a principled dual-component design: (1) an Information Bottleneck-guided filter that prunes task-irrelevant directions to mitigate semantic drift, and (2) a lightweight, training-only graph-based coordinator that enforces inter-layer consistency to resolve structural incoherence. Extensive experiments across large language models, vision language models, and vision models (including LLaMA, LLaVA, and ViT) demonstrate that \ours consistently establishes a new state of the art, outperforming not only vanilla LoRA but also advanced dynamic rank allocation and sparsity-based methods. Notably, the gains are particularly pronounced in challenging low-rank and low-data regimes. Crucially, since the proposed modules operate only during training, \ours improves performance with \textbf{zero additional inference cost}, shifting the focus of PEFT from mere parameter compression to a more holistic optimization of information quality and structural integrity.
\end{abstract}

\vspace{0pt}
\section{Introduction}

\begin{figure}[t]
  \centering
  \includegraphics[width=\columnwidth]{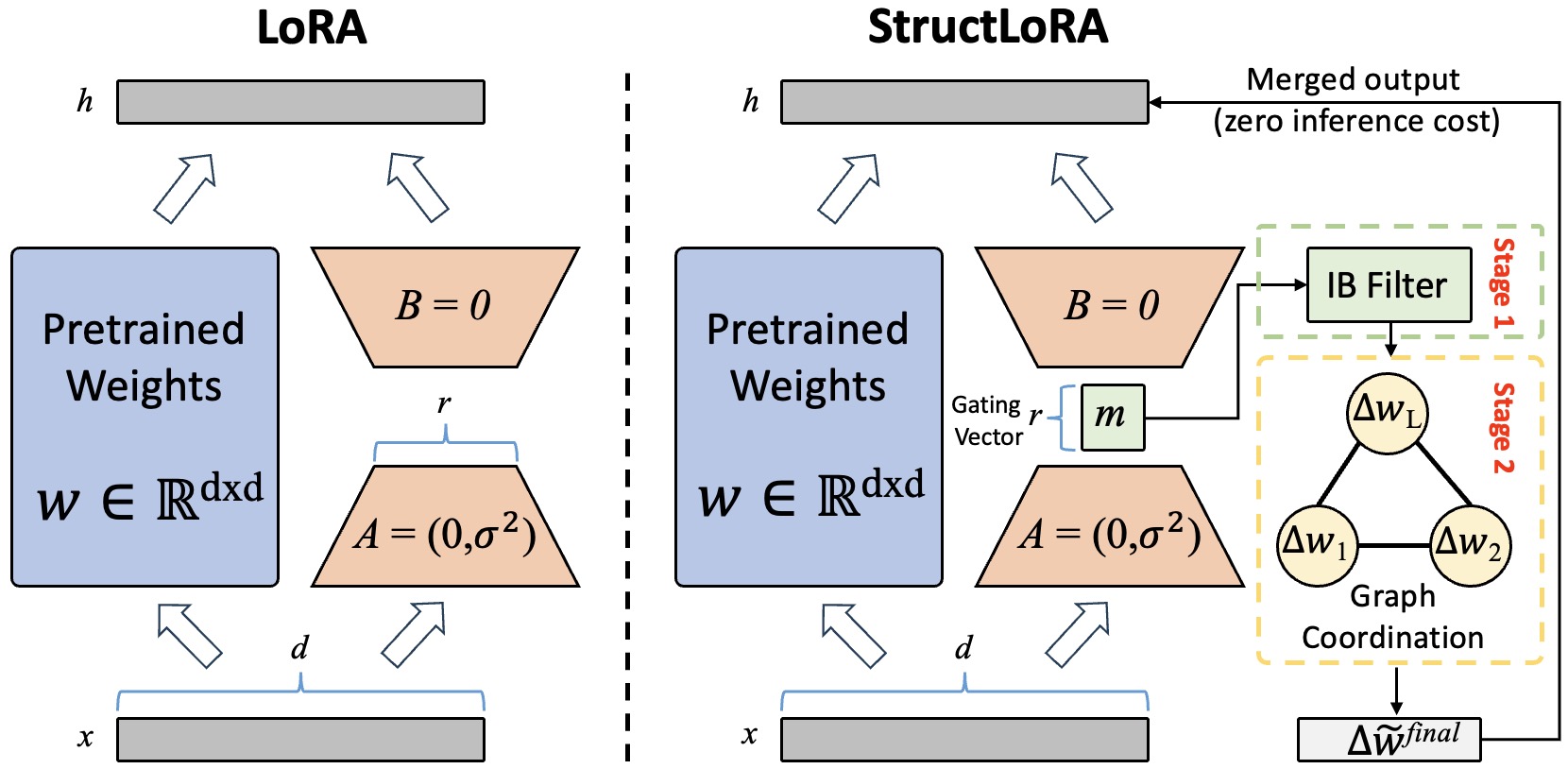}
  \caption{
    \textbf{Architectural comparison between LoRA and \ours.}
    The left illustrates the standard LoRA architecture with uniform low-rank updates,
    while the right shows our \ours, which introduces an \textit{Information Bottleneck (IB) filter} 
    and a \textit{Graph-based Coordination mechanism}. 
    These modules selectively retain task-relevant update directions and align layer-wise updates through message passing. Both operate only during training and are removed at inference, preserving LoRA's zero-latency efficiency.
  }
  \label{fig:lora-vs-structlora}
\end{figure}

The advent of large language models~(LLMs) has revolutionized natural language processing, yet their immense scale renders full fine-tuning prohibitively expensive for adaptation to downstream tasks. Parameter-Efficient Fine-Tuning~(PEFT) has emerged as the \textit{de facto} solution, enabling customization by updating only a small fraction of a model's parameters~\cite{han2024parameter}. Among these methods, Low-Rank Adaptation~(LoRA)~\cite{hu2022lora} has become a dominant paradigm due to its simplicity, parameter efficiency, and zero-latency inference~\cite{mao2025survey}. Its success has led to a range of variants, such as QLoRA~\cite{dettmers2023qlora} and AdaLoRA~\cite{zhang2023adaptive}, which further improve efficiency through quantization and adaptive rank allocation.

However, despite their widespread adoption, we identify two fundamental and largely unaddressed shortcomings in the LoRA paradigm. The first, \textit{semantic drift}, stems from allocating a limited parameter budget uniformly across all low-rank update directions, implicitly assuming that each direction is equally important for the target task. This assumption overlooks the fact that many directions may capture redundant or noisy signals, leading to inefficient use of capacity and suboptimal performance. The second, \textit{structural incoherence}, arises from adapting each layer independently, disregarding the inherent compositional structure of deep models such as Transformers~\cite{raghu2021vision, hu2022layerwise, touvron2022three, liu2026dispersion}. This layer-wise independence can lead to misaligned update directions across layers. We empirically observe this phenomenon, finding low cosine similarity between the update gradients of adjacent layers (e.g., 0.27--0.41), which indicates a lack of semantic coordination that can hinder generalization~\cite{sankararaman2020impact, jiang2024tracing, carbonnelle2018layer}.

To address these critical gaps, we introduce \ours, a novel framework that enhances LoRA with two synergistic components: task-aware filtering and structural coordination. An overview is shown in Figure~\ref{fig:lora-vs-structlora}. To counteract semantic drift, \ours first employs a principled filtering mechanism inspired by the Information Bottleneck principle~\cite{tishby2015deep, alemi2017deep, liao2024assessing}. This module learns to preserve only those update directions that are maximally informative for the task objective by optimizing a constrained objective.

Subsequently, to mitigate structural incoherence, we explicitly model inter-layer dependencies by representing the network as a computational graph. A lightweight, tailored Graph Neural Network~(GNN) then propagates and refines these filtered updates, encouraging smoother and more aligned adaptation trajectories across the model depth. Crucially, this coordination module operates \textbf{only during training} and is discarded at inference, fully preserving LoRA's zero-latency advantage.

We empirically validate the effectiveness of \ours across a comprehensive suite of benchmarks, fine-tuning powerful large language models, vision language models, and vision models such as LLaMA, LLaVA, and ViT on challenging tasks in natural language understanding, computer vision, and multimodal reasoning. Our results demonstrate that \ours consistently and significantly outperforms strong PEFT baselines, including LoRA, AdaLoRA, and DoRA, particularly in low-resource and low-rank regimes. Furthermore, \ours substantially closes the performance gap to full fine-tuning while incurring negligible training overhead and zero additional inference latency.

In summary, our core contributions include:
\begin{itemize}
    \item To our knowledge, this work is the first to identify and empirically demonstrate the problems of \textit{semantic drift} and \textit{structural incoherence} in LoRA-based methods, for which we propose a principled solution that combines task-aware filtering and structural coordination.
    \item We propose \ours, a lightweight, plug-and-play framework featuring an Information Bottleneck-guided filter to prune noisy update directions and a graph-based coordinator to enforce inter-layer consistency. The coordinator is detached at inference time and therefore incurs no inference overhead.
    \item We conduct extensive experiments on large language, multimodal, and vision models, showing that \ours sets a new state of the art for parameter-efficient fine-tuning, with particularly strong gains in low-rank and few-shot scenarios.
\end{itemize}

\section{Related Work}

\subsection{The Landscape of PEFT}
Adapting large pre-trained models with limited compute has motivated a rich line of PEFT methods~\cite{xiao2026promptbased, xiao2025viapt, xiao2025visual, han2024parameter, peft_survey_a2z_2024, xin2024parameter, peft_survey_braz_2024}. Three families are common: additive adapters that insert small trainable blocks~\cite{houlsby2019series, he2022parallel}, selective updates that tune only a subset of native weights such as biases~\cite{zaken2021bitfit}, and reparameterization approaches that learn a low‑dimensional proxy for the update. Our work follows the third family, whose low‑rank updates merge into the backbone after training and thus add no inference latency. \ours keeps LoRA's interface but introduces two principles that target overlooked issues in this family: task‑aware directional filtering and cross‑layer structural coordination.

\subsection{The Evolution of LoRA}
LoRA~\cite{hu2022lora} has spurred variants along complementary axes. For resource footprint, QLoRA combines LoRA with 4‑bit quantization to reduce memory~\cite{dettmers2023qlora}; parameter sharing further cuts trainables (e.g., VeRA and Tied‑LoRA)~\cite{kopiczko2024vera, renduchintala2024tied}. To relax fixed‑rank budgets, adaptive schemes allocate capacity by SVD‑style importance or learnable rank signals~\cite{zhang2023adaptive, shinwari2025ard}. Sparsity‑oriented lines prune modules or preserve sparsity in already sparse models~\cite{drost2024lora_family, lors2025, splora2024}. Another thread improves update quality: DoRA disentangles magnitude and direction~\cite{liu2024dora}, while orthogonality‑based constraints aim to stabilize learning and reduce interference~\cite{qiu2023controlling, hra2024, zhang2025hyperadalora, wang2025ctr}. These advances decide how much capacity to allocate and how to regularize factors; they rarely ask which directions in the low‑rank subspace are semantically useful for the task or how layer‑wise updates should be coordinated. \ours addresses both by coupling an Information‑Bottleneck filter~\cite{tishby2015deep} with a lightweight, training‑only coordinator that propagates filtered signals across depth.

\section{Methodology}
\label{sec:method}

We present \ours, a light extension to LoRA that learns \emph{what} directions to keep and \emph{how} layers should move together. The design follows two observations: many low-rank directions carry little signal for the target task, and layer-wise updates often drift when trained in isolation. \ours addresses both with a two-step procedure over the low-rank updates at each layer: an information bottleneck filter that selects task-relevant directions, and a graph-based coordinator that aligns updates across depth. 


\subsection{Preliminaries: Low-Rank Adaptation}
\label{sec:prelim}

Given a pretrained weight matrix $\mathbf{W}_{0} \in \mathbb{R}^{d\times k}$, LoRA learns a rank-$r$ update $\boldsymbol{\Delta W}=\mathbf{A}\mathbf{B}$ with $\mathbf{A} \in \mathbb{R}^{d\times r}$ and $\mathbf{B} \in \mathbb{R}^{r\times k}$, while keeping $\mathbf{W}_0$ frozen~\cite{hu2022lora}. The forward becomes Eqn~\eqref{eq:lora-forward}, and the product $\mathbf{A}\mathbf{B}$ can be merged into $\mathbf{W}_0$ after training, so inference cost does not change.
\begin{equation}
\label{eq:lora-forward}
\mathbf{y} = \big(\mathbf{W}_0 + \alpha \mathbf{A}\mathbf{B}\big) \mathbf{x}
\end{equation}
\ours keeps this interface and adds training-time selectivity and coordination.

\subsection{Stage 1: Information Bottleneck-Guided Directional Filtering}
\label{sec:ib-filter}

LoRA spreads a small budget over $r$ directions and treats them the same. In practice, only a few directions help predict the label; others carry nuisance variation. We want the update to retain information about $\mathbf{Y}$ but discard information about $\mathbf{X}$ that does not help the task.

Let $\boldsymbol{\Delta W}=\mathbf{A}\mathbf{B}$. We gate the $r$ rank-one directions with a learnable mask $\mathbf{m} \in [0,1]^r$ and form the filtered update, as shown in Eqn~\eqref{eq:filtered-update}, where $\text{diag}(\mathbf{m})$ denotes a diagonal matrix whose diagonal entries are the elements of $\mathbf{m}$, 
scaling each rank-one direction by its corresponding gate value.
\begin{equation}
\label{eq:filtered-update}
\boldsymbol{\Delta \tilde{W}} = \mathbf{A} \text{diag}(\mathbf{m}) \mathbf{B}
\end{equation}
We learn $\mathbf{m}$ by an information bottleneck (IB) objective~\cite{tishby2015deep, alemi2017deep, liao2024assessing}, as described in Eqn~\eqref{eq:ib-lagrangian}, where $\mathcal{L}_{\text{task}}$ is the supervised loss, and $I(\cdot ; \cdot)$ denotes mutual information. 
\begin{equation}
\label{eq:ib-lagrangian}
\mathcal{L}_{\text{IB}}
=
\mathcal{L}_{\text{task}}
+ \beta\, I\!\big(\boldsymbol{\Delta \tilde{W}};\,\mathbf{X}\big)
- \gamma\, I\!\big(\boldsymbol{\Delta \tilde{W}};\,\mathbf{Y}\big)
\end{equation}
Following the variational IB derivation~\cite{alemi2017deep}, we use a tractable upper bound in which the compression term introduces a KL penalty between a learned posterior over the gate and a simple prior; this acts like a sparsity/weight-decay regularizer on $\mathbf{m}$. When we need hard selection, we use a Gumbel-Softmax relaxation to keep training smooth while pushing $\mathbf{m}$ toward $\{0,1\}$~\cite{jang2017categorical}. If we estimate MI terms $I(\cdot;\cdot)$ explicitly, we adopt standard variational estimators such as MINE~\cite{belghazi2018mutual}. In short, the filter lets LoRA spend rank on the few directions that move the loss.

Eq.~\eqref{eq:ib-lagrangian} raises the signal-to-noise ratio of the update: it rewards dependence on labels and penalizes spurious dependence on inputs. Under small $r$ or small data, this selectivity reduces variance and makes training steadier. We observe that the filtered updates show higher cross-layer consistency and better task scores than the unfiltered ones (see Section~\ref{sec:experiments}).

\subsection{Stage 2: Graph-Based Layer Coordination}
\label{sec:gnn-coord}

Backprop couples all parameters through the loss, but it does not impose a structural prior on how \emph{adjacent} layers should update. In Transformers, features change gradually with depth. If layers move in different directions, the trajectory fragments. We add a light coordinator that lets layers exchange their update signals.

\paragraph{Graph construction}
We view the network as a graph $\mathcal{G}=(\mathcal{V},\mathcal{E})$ with one node per layer. The node feature is the flattened filtered update $\mathbf{h}^{(0)}_{\ell}=\text{vec}(\boldsymbol{\Delta \tilde{W}}_{\ell})$. We connect adjacent layers, and we may add semantic edges between layers whose batch-averaged gradients are highly aligned (e.g., by cosine threshold). This captures both depth adjacency and data-driven correlation.

\paragraph{Message passing and reconstruction}
We run a shallow GNN (GCN or GAT) with a residual path~\cite{kipf2017gcn,velickovic2018gat} as described in Eqn~\eqref{eq:gnn}.
\begin{equation}
\label{eq:gnn}
\begin{split}
\mathbf{h}^{(t+1)}_{\ell}
=&~\mathbf{h}^{(t)}_{\ell} +\\
&~\sigma \!\left(
\sum_{j \in \mathcal{N}(\ell) \cup \{\ell\}}
\frac{1}{\sqrt{d_{\ell} d_{j}}}\,
\mathbf{h}^{(t)}_{j} \boldsymbol{\Theta}^{(t)}
\right)
\end{split}
\end{equation}

Here $\mathcal{N}(\ell)$ is the neighbor set, $d_{\ell}$ is node degree, $\boldsymbol{\Theta}^{(t)}$ is a learned matrix, and $\sigma$ is a nonlinearity. After $T$ steps (we keep $T$ small), we map back to parameter space, as shown in Eqn~\eqref{eq:proj}.
\begin{equation}
\label{eq:proj}
\boldsymbol{\Delta \tilde{W}}_{\ell}^{ \text{final}}
=
\textrm{reshape} \big(\mathbf{W}_{o} \mathbf{h}^{(T)}_{\ell}\big)
\end{equation}
The coordinator runs only at training time. At inference, we merge $\boldsymbol{\Delta \tilde{W}}^{ \text{final}}$ into $\mathbf{W}_0$, so runtime stays the same as LoRA. For more detailed proof process, please refer to \textit{Appendix}~\ref{sec:proof}.

\subsection{A Minimal Theoretical View: Coordination as Laplacian Smoothing}
\label{sec:min-theory}


We treat the layer-wise update $\boldsymbol{\Delta \tilde{W}}_\ell$ at depth $\ell$
as a signal along the network depth and denote its vectorized form as
$\mathbf{u}_{\ell} = \text{vec}(\boldsymbol{\Delta \tilde{W}}_{\ell})$.
We then define the inter-layer drift energy as Eqn~\eqref{eq:energy}, where $\mathbf{L}$ is the graph Laplacian over depth.
\begin{align}
\label{eq:energy}
\mathcal{E}(\mathbf{U}) 
= \sum_{\ell=1}^{L-1} 
\big\| \mathbf{u}_{\ell+1} - \mathbf{u}_{\ell} \big\|_2^2
= \mathbf{U}^{\top} (\mathbf{L} \otimes \mathbf{I}) \mathbf{U}
\end{align}

One residual message-passing step can be written (to first order) as Eqn~\eqref{eq:lap-smooth}, which equals a gradient step that reduces $\mathcal{E}(\mathbf{U})$.
\begin{equation}
\label{eq:lap-smooth}
\mathbf{U}^{(t+1)}
\approx
\mathbf{U}^{(t)}
-
\eta (\mathbf{L} \otimes \mathbf{I}) \mathbf{U}^{(t)}
\end{equation}
In other words, coordination acts like a Laplacian smoother on update directions. Backprop still optimizes $\mathcal{L}_{\text{task}}$; the coordinator adds an explicit structural prior that lowers drift. These quantities are simple to compute and serve as diagnostics together with adjacent-layer cosine in Section~\ref{sec:experiments}. For more detailed proofs, please refer to \textit{Appendix}~\ref{sec:proof}.

\subsection{Objective, Training, and Inference}
\label{sec:obj}

We train end-to-end with the loss function described in Eqn~\eqref{eq:total}, where $\mathcal{L}_{\text{IB}}$ is the variational surrogate from Eq.~\eqref{eq:ib-lagrangian}.
\begin{equation}
\label{eq:total}
\begin{split}
\mathcal{L}_{\text{total}}
&= 
\mathcal{L}_{\text{task}}\!\Big(
\mathbf{Y},
f(\mathbf{X};\,\mathbf{W}_0+\boldsymbol{\Delta \tilde{W}}^{\text{final}})
\Big)
\\
&\quad + \lambda_{\text{IB}}\,
\mathcal{L}_{\text{IB}}(\mathbf{m})
\end{split}
\end{equation}

The IB filter and the GNN are \emph{training-only}; both are dropped for inference, so latency stays unchanged. In practice, we use a shallow GNN (1 to 2 layers) and amortize MI estimation across layers to keep overhead modest.



\section{Experiments}
\label{sec:experiments}


\subsection{Experimental Setup}
\label{sec:setup}

\paragraph{Models and Architectures}
Our evaluation employed various powerful and publicly available foundation models, including 
LLaMA-7B/13B~\cite{touvron2023llama}, LLaMA3.1-8B~\cite{meta2024llama3-1}, Qwen2.5-7B~\cite{qwen2024qwen2.5}, and Gemma 2 9B~\cite{gemma2024gemma2} for natural language understanding and generation tasks; ViT-B/16~\cite{dosovitskiy2021image} for image classification; LLaVA-1.5-7B~\cite{liu2024llava} for vision-language instruction following.
For all experiments, the base model weights were kept frozen. In Transformer architectures, PEFT methods were applied to the query ($W_q$) and value ($W_v$) projection matrices in the self-attention blocks, a standard and effective configuration.

\begin{table*}[!th]
\centering
\scriptsize
\caption{
    \textbf{Main comparison of \ours with baseline PEFT methods.}
    We report primary metrics~(Accuracy~\% for classification, reasoning and question answering; CIDEr for captioning) under a comparable budget of trainable parameters~($\sim$0.5--1\%). 
    See \textit{Appendix}~\ref{sec:cross-model} for additional results.
}
\vspace{-4pt}
\setlength{\tabcolsep}{2pt}
\setlength{\extrarowheight}{1.5pt}
\resizebox{\textwidth}{!}{%
\begin{tabular}{llcccccc}
\toprule
\multirow{2}{*}{Method} & \multirow{2}{*}{Type} & \multicolumn{2}{c}{Reasoning} & \multicolumn{2}{c}{Image Classification} & Captioning & Visual QA \\
\cmidrule(r){3-4} \cmidrule(r){5-6} \cmidrule(r){7-7} \cmidrule{8-8}
&& {BoolQ}~$\uparrow$ & {PIQA}~$\uparrow$ & {CIFAR-100}~$\uparrow$ & {ImageNet-1k}~$\uparrow$ & {COCO Caption}~$\uparrow$ & {VQAv2}~$\uparrow$ \\
\midrule
\textcolor{gray}{Full Fine-tuning} & \textcolor{gray}{---} & \textcolor{gray}{82.6} & \textcolor{gray}{85.3} & \textcolor{gray}{85.9} & \textcolor{gray}{78.8} & \textcolor{gray}{123.5} & \textcolor{gray}{76.2} \\
Linear Probing & --- & 71.1 & 74.8 & 65.7 & 68.4 & 105.3 & 67.0 \\
\midrule
Adapter~\cite{houlsby2019series} & Additive & 79.2 & 82.5 & 81.3 & 75.7 & 116.5 & 73.0 \\
Prefix-Tuning~\cite{li2021prefix} & Additive & 79.6 & 82.8 & 81.7 & 76.1 & 117.0 & 73.3 \\
\midrule
LoRA~\cite{hu2022lora} & Reparam. & 79.1 & 82.4 & 81.5 & 76.2 & 116.2 & 73.5 \\
QLoRA~\cite{dettmers2023qlora} & Reparam. & 80.0 & 83.1 & 82.7 & 76.9 & 119.1 & 74.2 \\
DoRA~\cite{liu2024dora} & Reparam. & 80.6 & 83.7 & 83.2 & 77.3 & 120.3 & 75.0 \\
VeRA~\cite{kopiczko2024vera} & Reparam. & 79.8 & 83.0 & 82.3 & 76.4 & 118.4 & 74.1 \\
\midrule
DyLoRA~\cite{valipour2023dylora} & Dynamic Rank & 80.1 & 83.5 & 82.8 & 77.0 & 119.5 & 74.8 \\
Sensitivity-LoRA~\cite{zhang2025sensitivity} & Dynamic Rank & 80.9 & 84.0 & 83.5 & 77.5 & 120.8 & 75.2 \\
\midrule
LoRA-Dropout~\cite{lora_dropout_2024} & Sparsity & 80.2 & 83.3 & 82.5 & 76.8 & 118.8 & 74.5 \\
LoRAPrune~\cite{loraprune_2023} & Sparsity & 79.8 & 82.9 & 82.1 & 76.6 & 118.1 & 74.0 \\
\midrule
\rowcolor{lightblue!50}
\raisebox{0.7\height}{\textbf{\ours (ours)}} &
\shortstack[l]{\textbf{Filtering +}\\\textbf{Coordination}} &
\raisebox{0.7\height}{\textbf{82.1}} &
\raisebox{0.7\height}{\textbf{84.9}} &
\raisebox{0.7\height}{\textbf{85.1}} &
\raisebox{0.7\height}{\textbf{78.6}} &
\raisebox{0.7\height}{\textbf{122.9}} &
\raisebox{0.7\height}{\textbf{75.9}} \\
\bottomrule
\end{tabular}
}
\label{tab:main-results}
\end{table*}

\begin{table*}[!th]
\centering
\small
\caption{
    \textbf{Head-to-head comparison on the GLUE benchmark using RoBERTa-base.} 
    All baseline settings follow \citet{zhang2025sensitivity}. 
    \ours consistently surpasses all dynamic rank allocation methods, demonstrating superior performance on standard natural language understanding (NLU) tasks.
}
\vspace{-4pt}
\setlength{\tabcolsep}{6pt}
\setlength{\extrarowheight}{2pt}
\resizebox{\textwidth}{!}{%
\begin{tabular}{lccccccccc}
\toprule
Method & {MNLI}~$\uparrow$ & {SST-2}~$\uparrow$ & {MRPC}~$\uparrow$ & {CoLA}~$\uparrow$ & {QNLI}~$\uparrow$ & {QQP}~$\uparrow$ & {RTE}~$\uparrow$ & {STS-B}~$\uparrow$ & \textbf{Average}~$\uparrow$ \\
\midrule
LoRA & 87.3 & 93.5 & 87.1 & 58.8 & 93.0 & 90.5 & 79.4 & 91.0 & 85.1 \\
AdaLoRA & 87.3 & 93.6 & 87.3 & 59.0 & 93.1 & 90.6 & 79.6 & 91.2 & 85.2 \\
DyLoRA & 87.2 & 93.7 & 87.3 & 59.0 & 93.0 & 90.6 & 79.6 & 91.2 & 85.2 \\
Sensitivity-LoRA & 87.6 & 94.6 & 87.7 & 60.2 & 93.6 & 90.7 & 81.8 & 91.3 & 86.0 \\
\rowcolor{lightblue!50}
\textbf{\ours (ours)} & \textbf{88.1} & \textbf{95.0} & \textbf{88.5} & \textbf{61.5} & \textbf{94.1} & \textbf{91.0} & \textbf{82.3} & \textbf{91.5} & \textbf{86.5} \\
\bottomrule
\end{tabular}
}
\label{tab:GLUE-results}
\end{table*}

\paragraph{Tasks and Datasets}
We selected diverse challenging benchmarks to assess performance across different domains.
For Natural Language Understanding, we evaluated on the full {GLUE} benchmark~\cite{wang2018GLUE} and a comprehensive set of eight commonsense reasoning benchmarks: {BoolQ}~\cite{BoolQ}, {PIQA}~\cite{PIQA}, {HellaSwag}~\cite{HellaSwag}, {WinoGrande}~\cite{WinoGrande}, {ARC-e}~\cite{ARC}, {ARC-c}~\cite{ARC}, {OBQA}~\cite{OBQA}, and {CSQA}~\cite{CommonSenseQA}. Following standard practice, each task was fine-tuned individually on its respective training set.
For Natural Language Generation, we used the Magpie-Pro~\cite{magpie2024} and OpenPlatypus~\cite{lee2023platypus} datasets to evaluate instruction-following capabilities.
For Image Classification, we used {ImageNet-1k}, {CIFAR-100}, and the fine-grained {Oxford-IIIT} Pet dataset.
For Multimodal Reasoning, we used established vision-language benchmarks including {MS COCO} for captioning, {VQAv2} and {GQA} for visual question answering. More details about the baseline, implementation and evaluation can be found in \textit{Appendix}~\ref{sec:setup-extended}.

\begin{table*}[t]
\centering
\small
\caption{
    \textbf{Performance under varying rank budgets.} 
    Accuracy (for {BoolQ}, {CIFAR-100}) and CIDEr (for {COCO Caption}) are reported. 
    \ours consistently outperforms LoRA, with the largest gains observed in low-rank ($r \le 8$) settings, 
    demonstrating superior parameter efficiency.
}
\vspace{-4pt}
\setlength{\tabcolsep}{8pt}
\setlength{\extrarowheight}{2pt}
\resizebox{\textwidth}{!}{%
\begin{tabular}{cccccccc}
\toprule
\multirow{2}{*}{Rank ($r$)} & \multirow{2}{*}{\shortstack[c]{Trainable\\Params (\%)}} & \multicolumn{2}{c}{{BoolQ~$\uparrow$} (LLaMA-7B)} & \multicolumn{2}{c}{{CIFAR-100~$\uparrow$} (ViT-B/16)} & \multicolumn{2}{c}{{COCO Caption~$\uparrow$} (LLaVA-1.5-7B)} \\
\cmidrule(l){3-4} \cmidrule(l){5-6} \cmidrule(l){7-8}
&& LoRA & \cellcolor{lightblue!50}\textbf{\ours} & LoRA & \cellcolor{lightblue!50}\textbf{\ours} & LoRA & \cellcolor{lightblue!50}\textbf{\ours} \\
\midrule
2  & 0.12 & 75.1 & \cellcolor{lightblue!50}\textbf{77.4} \textcolor{naturegreen}{(+2.3)} & 78.3 & \cellcolor{lightblue!50}\textbf{80.1} \textcolor{naturegreen}{(+1.8)} & 111.2 & \cellcolor{lightblue!50}\textbf{114.3} \textcolor{naturegreen}{(+3.1)} \\
4  & 0.24 & 77.6 & \cellcolor{lightblue!50}\textbf{79.9} \textcolor{naturegreen}{(+2.3)} & 79.7 & \cellcolor{lightblue!50}\textbf{82.2} \textcolor{naturegreen}{(+2.5)} & 113.8 & \cellcolor{lightblue!50}\textbf{117.0} \textcolor{naturegreen}{(+3.2)} \\
8  & 0.48 & 79.1 & \cellcolor{lightblue!50}\textbf{81.3} \textcolor{naturegreen}{(+2.2)} & 81.5 & \cellcolor{lightblue!50}\textbf{84.1} \textcolor{naturegreen}{(+2.6)} & 116.2 & \cellcolor{lightblue!50}\textbf{122.4} \textcolor{naturegreen}{(+6.2)} \\
16 & 0.95 & 80.3 & \cellcolor{lightblue!50}\textbf{81.7} \textcolor{naturegreen}{(+1.4)} & 82.8 & \cellcolor{lightblue!50}\textbf{84.3} \textcolor{naturegreen}{(+1.5)} & 118.1 & \cellcolor{lightblue!50}\textbf{123.6} \textcolor{naturegreen}{(+5.5)} \\
32 & 1.90 & 81.0 & \cellcolor{lightblue!50}\textbf{81.9} \textcolor{naturegreen}{(+0.9)} & 83.4 & \cellcolor{lightblue!50}\textbf{84.5} \textcolor{naturegreen}{(+1.1)} & 119.0 & \cellcolor{lightblue!50}\textbf{123.9} \textcolor{naturegreen}{(+4.9)} \\
\bottomrule
\end{tabular}
}
\label{tab:rank-sweep}
\end{table*}

\begin{table*}[!th]
\centering
\small
\caption{
    \textbf{Performance under limited supervision (few-shot learning).}
    We evaluate with a fixed rank $r=8$ while varying data availability. 
    \ours's advantage over LoRA grows as the amount of training data decreases, 
    highlighting its superior data efficiency and robustness to overfitting.
}
\vspace{-4pt}
\setlength{\tabcolsep}{12pt}
\setlength{\extrarowheight}{2pt}
\resizebox{\textwidth}{!}{%
\begin{tabular}{cccllll}
\toprule
Dataset & Metric & Method & 10\% Data & 25\% Data & 50\% Data & 100\% Data \\
\midrule
\multirow{2}{*}{\shortstack[c]{{BoolQ}\\(LLaMA-7B)}} 
& \multirow{2}{*}{Accuracy~$\uparrow$} 
& LoRA & 68.5 & 73.2 & 76.4 & 79.1 \\
& & \cellcolor{lightblue!50}\textbf{\ours} 
& \cellcolor{lightblue!50}\textbf{71.2} \textcolor{naturegreen}{(+2.7)} 
& \cellcolor{lightblue!50}\textbf{76.3} \textcolor{naturegreen}{(+3.1)} 
& \cellcolor{lightblue!50}\textbf{78.9} \textcolor{naturegreen}{(+2.5)} 
& \cellcolor{lightblue!50}\textbf{81.3} \textcolor{naturegreen}{(+2.2)} \\
\midrule
\multirow{2}{*}{\shortstack[c]{{CIFAR-100}\\(ViT-B/16)}} 
& \multirow{2}{*}{Accuracy~$\uparrow$} 
& LoRA & 73.6 & 78.0 & 80.5 & 81.5 \\
& & \cellcolor{lightblue!50}\textbf{\ours} 
& \cellcolor{lightblue!50}\textbf{76.3} \textcolor{naturegreen}{(+2.7)} 
& \cellcolor{lightblue!50}\textbf{80.5} \textcolor{naturegreen}{(+2.5)} 
& \cellcolor{lightblue!50}\textbf{82.4} \textcolor{naturegreen}{(+1.9)} 
& \cellcolor{lightblue!50}\textbf{84.1} \textcolor{naturegreen}{(+2.6)} \\
\midrule
\multirow{2}{*}{\shortstack[c]{{COCO Caption}\\(LLaVA-1.5-7B)}} 
& \multirow{2}{*}{CIDEr~$\uparrow$} 
& LoRA & 100.2 & 108.3 & 114.0 & 116.2 \\
& & \cellcolor{lightblue!50}\textbf{\ours} 
& \cellcolor{lightblue!50}\textbf{103.7} \textcolor{naturegreen}{(+3.5)} 
& \cellcolor{lightblue!50}\textbf{112.4} \textcolor{naturegreen}{(+4.1)} 
& \cellcolor{lightblue!50}\textbf{117.9} \textcolor{naturegreen}{(+3.9)} 
& \cellcolor{lightblue!50}\textbf{122.4} \textcolor{naturegreen}{(+6.2)} \\
\bottomrule
\end{tabular}
}
\label{tab:fewshot}
\end{table*}

\subsection{Main Results}

Our empirical evaluation demonstrates that \ours establishes a new state-of-the-art in parameter-efficient fine-tuning, consistently outperforming a wide range of established and recent baselines across diverse modalities. The results validate our core hypotheses: that principled, task-aware directional filtering and structural coordination of updates are critical for unlocking the full potential of low-rank adaptation.

\paragraph{Overall Performance}
As shown in Table~\ref{tab:main-results}, \ours achieves strong performance across all language, vision, and multimodal benchmarks. On the commonsense reasoning tasks {BoolQ} and {PIQA}, \ours surpasses the strongest dynamic rank allocation methods, including Sensitivity-LoRA and DyLoRA, by a significant margin. The performance gains are even more pronounced on vision tasks, \ours outperforms all baselines on {CIFAR-100} and {ImageNet-1k}, achieving results that are nearly on par with full fine-tuning while using less than 1\% of the trainable parameters.

\ours also demonstrates superiority over methods that employ heuristic sparsity. While LoRA-Dropout~\cite{lora_dropout_2024}, a strong regularization baseline, improves upon vanilla LoRA, it still falls short of \ours's performance. This suggests that the principled, task-aware filtering of our Information Bottleneck module is more effective at identifying and pruning irrelevant update directions than untargeted, stochastic dropout. Similarly, \ours outperforms structured pruning methods like LoRAPrune~\cite{loraprune_2023}, indicating that dynamic, in-training coordination and filtering is more effective than iterative, gradient-based pruning schemes.

\paragraph{Head-to-Head on the {GLUE} Benchmark}
To provide a direct, controlled comparison with recent dynamic rank allocation methods, we evaluate \ours on the {GLUE} benchmark using the same RoBERTa-base setup as \citet{zhang2025sensitivity}. As shown in Table~\ref{tab:GLUE-results}, \ours achieves a new state-of-the-art average score of \textbf{86.5}, outperforming Sensitivity-LoRA by a margin of 0.5 points. This is particularly noteworthy as Sensitivity-LoRA is highly optimized for this benchmark. The superior performance of \ours suggests that its dual mechanism of filtering irrelevant information and enforcing structural coherence provides a more robust and effective adaptation strategy than relying solely on parameter sensitivity heuristics.

\begin{table*}[t]
\centering
\small
\caption{
    \textbf{Module-wise ablation of \ours.} 
    Removing either \textit{information bottleneck (IB)} filter or the \textit{graph-based coordination network} degrades performance, while removing both collapses \ours to LoRA, demonstrating their synergistic contribution. 
    See further analysis in \textit{Appendix}~\ref{sec:coord-alternatives}.
}
\vspace{-4pt}
\setlength{\tabcolsep}{8pt}
\setlength{\extrarowheight}{2pt}
\resizebox{\textwidth}{!}{%
\begin{tabular}{lcccccc}
\toprule
\multirow{2}{*}{Ablation Setting} 
& \multicolumn{2}{c}{{BoolQ} (LLaMA-7B)} 
& \multicolumn{2}{c}{{CIFAR-100} (ViT-B/16)} 
& \multicolumn{2}{c}{{COCO Caption} (LLaVA-1.5-7B)} \\
\cmidrule(lr){2-3} \cmidrule(lr){4-5} \cmidrule(lr){6-7}
& Accuracy (\%)~$\uparrow$ & $\Delta$ & Accuracy (\%)~$\uparrow$ & $\Delta$ & CIDEr~$\uparrow$ & $\Delta$ \\
\midrule
\rowcolor{lightblue!50} \textbf{\ours (Full)} & \textbf{81.3} & \phantom{-}0.0 & \textbf{84.1} & \phantom{-}0.0 & \textbf{122.4} & \phantom{-}0.0 \\
\quad w/o IB Filter & 79.4 & \textcolor{crimson}{-1.9} & 81.9 & \textcolor{crimson}{-2.2} & 117.8 & \textcolor{crimson}{-4.6} \\
\quad w/o GNN Coordination & 80.1 & \textcolor{crimson}{-1.2} & 82.6 & \textcolor{crimson}{-1.5} & 119.4 & \textcolor{crimson}{-3.0} \\
\quad w/o Both (i.e., Standard LoRA) & 79.1 & \textcolor{crimson}{-2.2} & 81.5 & \textcolor{crimson}{-2.6} & 116.2 & \textcolor{crimson}{-6.2} \\
\bottomrule
\end{tabular}
}
\label{tab:ablation-modules}
\end{table*}

\paragraph{Superior Efficiency in Low-Rank Regimes}
A key test for any PEFT method is its ability to perform under highly constrained parameter budgets. Table~\ref{tab:rank-sweep} evaluates \ours's performance as a function of the LoRA rank $r$. The results clearly show that \ours's advantage is most pronounced in low-rank settings. At an aggressive rank of $r=2$ (representing just 0.12\% of the model's parameters), \ours outperforms LoRA by a significant margin of \textbf{+2.3\%} on {BoolQ} and \textbf{+3.1} CIDEr on {COCO Caption}ing. This demonstrates the critical importance of our IB-guided filter, which acts as an intelligent resource manager. When the parameter budget is scarce, the filter ensures that capacity is allocated only to the most task-relevant update directions, preventing the model from wasting parameters on noisy or redundant signals. As the rank increases, the performance gap narrows but remains consistently in favor of \ours, indicating that even with a larger budget, the combination of filtering and structural coordination provides a superior inductive bias for adaptation.

\paragraph{Robustness in Low-Data Regimes}
In many real-world applications, labeled data is scarce. We evaluate \ours's performance when fine-tuned on fractions of the full training data. Table~\ref{tab:fewshot} shows that \ours is significantly more robust than LoRA in low-data settings. With only 10\% of the training data, \ours's performance advantage widens to \textbf{+2.7\%} on {BoolQ} and a remarkable \textbf{+3.5} CIDEr on {COCO Caption}ing. This heightened robustness stems from the dual regularizing effects of our framework. The IB-filter prevents the model from overfitting to spurious correlations in the small dataset by pruning noisy directions, while the GNN coordinator enforces a structural prior that encourages smoother, more generalizable updates across layers. This demonstrates that \ours is not only more parameter-efficient but also more data-efficient.

\subsection{Ablation Studies}
\label{sec:ablation}

To validate the design of \ours and disentangle the contributions of its core components, we conduct a series of rigorous ablation studies. Our analysis is structured to answer three key questions: (1) \textit{Are both the IB filter and the GNN coordinator necessary for the observed performance gains? }(2) \textit{Is our specific design of the GNN coordinator optimal?} (3) \textit{Is the information-theoretic filtering strategy superior to simpler heuristics?}



\paragraph{Contribution of Core Components}
As shown in Table~\ref{tab:ablation-modules}, removing either the Information Bottleneck (IB) filter or the Graph Neural Network (GNN) coordinator leads to notable performance degradation across all tasks, while removing both reduces \ours to standard LoRA. 
In particular, removing the IB filter causes the largest drop (e.g., \textbf{-1.9\%} on {BoolQ} and \textbf{-4.6} CIDEr on {COCO Caption}ing), highlighting the importance of filtering out task-irrelevant update directions. 
This supports our central hypothesis that much of the low-rank subspace in vanilla LoRA is occupied by noisy or redundant information. 
The GNN coordinator also yields consistent gains (e.g., \textbf{-1.2\%} on {BoolQ} and \textbf{-1.5\%} on {CIFAR-100}), confirming that enforcing structural coherence across layers is crucial for generalization. 
Together, the two modules act synergistically: the IB filter produces cleaner, task-relevant signals, which the GNN coordinator effectively propagates across layers.

\paragraph{Effect of the GNN Coordinator Design}
A key criticism of prior work is that coordination mechanisms can be arbitrary. We therefore conduct targeted ablations to justify our specific design choices for the GNN module. As shown in Table~\ref{tab:ablation-gnn}, we find that a shallow, 1-layer GNN provides the optimal balance. Increasing the depth to 2 or 3 layers leads to a consistent decline in performance, likely due to oversmoothing, a known issue in deep GNNs where node representations become indistinguishable~\cite{kipf2017gcn}. We also find that our hybrid graph construction, which uses both structural adjacency and semantic similarity for edges, outperforms simpler graphs that use only one of these criteria. This confirms that our GNN design is not arbitrary but is a carefully considered choice that provides robust, lightweight coordination without falling into common GNN pitfalls.

\begin{table}[!t]
\centering
\small
\caption{
    \textbf{Ablation on GNN design choices on the {BoolQ} dataset.} The default \ours employs a 1-layer GNN with a hybrid graph combining adjacency and similarity edges, achieving the highest accuracy.
}
\vspace{-4pt}
\setlength{\tabcolsep}{8pt}
\setlength{\extrarowheight}{2pt}
\resizebox{0.45\textwidth}{!}{%
\begin{tabular}{lc}
\toprule
GNN Configuration & Accuracy (\%)~$\uparrow$ \\
\midrule
\rowcolor{lightblue!50} \textbf{\ours} & \textbf{81.3} \\
\midrule
\textit{Effect of Depth:} & \\
\quad 2-Layer GNN & 80.4 \\
\quad 3-Layer GNN & 79.7 \\
\midrule
\textit{Effect of Graph Construction:} & \\
\quad Adjacency Only & 80.5 \\
\quad Similarity Only & 80.2 \\
\bottomrule
\end{tabular}
}
\label{tab:ablation-gnn}

\end{table}

\begin{figure}[!th]
\centering
\includegraphics[width=\linewidth]{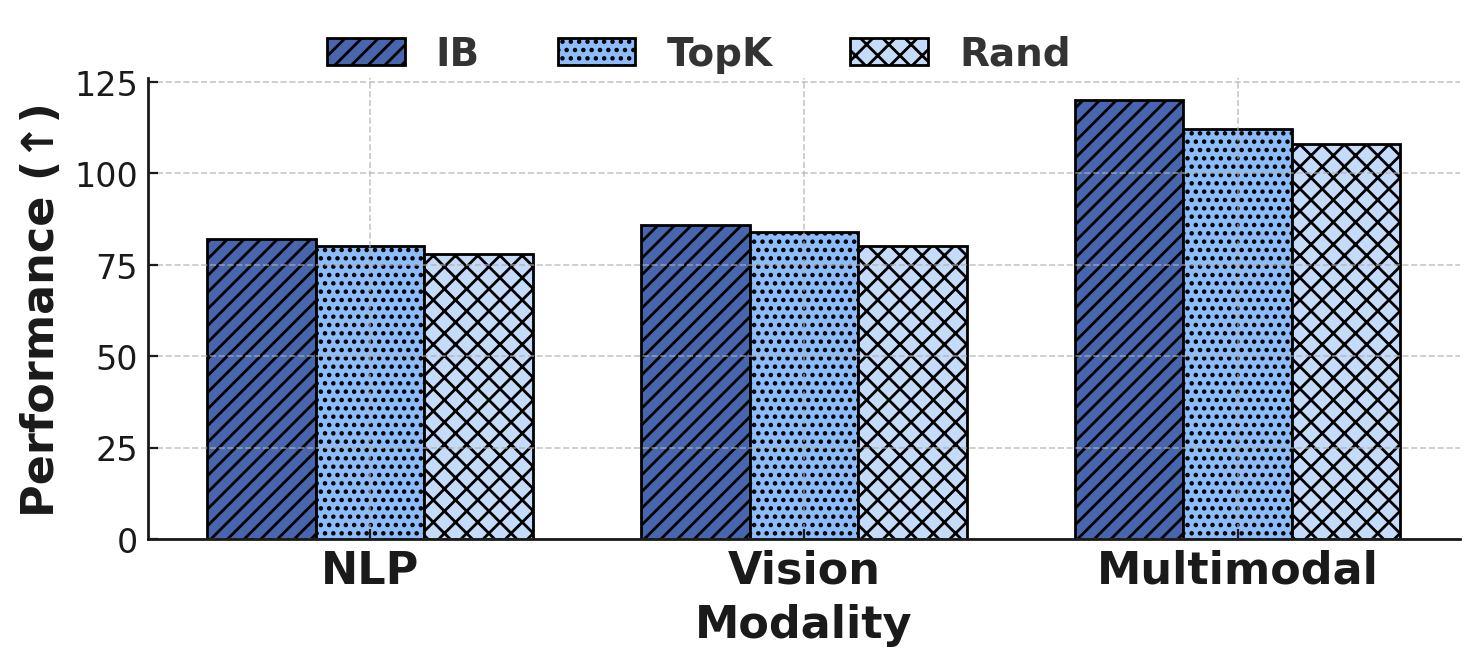}
\caption{\textbf{Analysis of filtering strategies.}
We compare our IB-guided filter with two heuristics under the same keep ratio:
\emph{Random Masking} and \emph{Top-$k$ Norm} (scored by $\|a_j\|_2\|b_j\|_2$ for each rank-one direction).
Bars show mean performance across three random seeds.}
\label{fig:filtering-strategies}
\end{figure}

\paragraph{Effect of Filtering Strategies}
To validate our use of a principled, information-theoretic filter, we compare it against several simpler heuristic-based filtering strategies in Figure~\ref{fig:filtering-strategies}. We compare our IB-guided filtering with (1) Random Masking, where a random subset of directions are kept; and (2) Top-$k$ Norm, where the directions corresponding to the largest L2-norms in the update matrix are retained. Our IB-guided approach consistently and significantly outperforms all alternatives across all modalities. Notably, while norm-based filtering provides a slight improvement over random selection, it is substantially worse than our method. This result provides strong evidence that the magnitude of an update direction is a poor proxy for its semantic relevance to the task, a key insight that motivates our work.

\begin{table}[!t]
\centering
\small
\caption{
    \textbf{Training overhead analysis on LLaMA-7B with rank $r=8$.} \ours introduces minimal overhead compared to standard LoRA.
}
\vspace{-4pt}
\setlength{\tabcolsep}{10pt}
\setlength{\extrarowheight}{2pt}
\resizebox{0.45\textwidth}{!}{%
\begin{tabular}{lcc}
\toprule
\multirow{2}{*}{Method} & Training Time & Peak Memory\\
 &  / Epoch & (GB) \\
\midrule
LoRA & 1.00$\times$ & 16.8 \\
\rowcolor{lightblue!50} \textbf{\ours} & 1.06$\times$ & 17.5 \\
\bottomrule
\end{tabular}
}
\label{tab:ablation-overhead}
\end{table}

\paragraph{Analysis of Training Overhead}
We analyze the computational overhead introduced by \ours during training. As shown in Table~\ref{tab:ablation-overhead}, our framework introduces a negligible increase in training time and peak memory usage compared to standard LoRA. On LLaMA-7B, \ours is only \textbf{4-6\%} slower per epoch and requires less than \textbf{0.8~GB} of additional peak GPU memory. This minimal overhead is a small price for the significant performance gains, and importantly, this cost is incurred \textbf{only during training}, as both the IB filter and GNN coordinator are discarded at inference time, preserving LoRA's zero-latency advantage.

\begin{figure}[t]
    \centering
    \includegraphics[width=0.8\columnwidth]{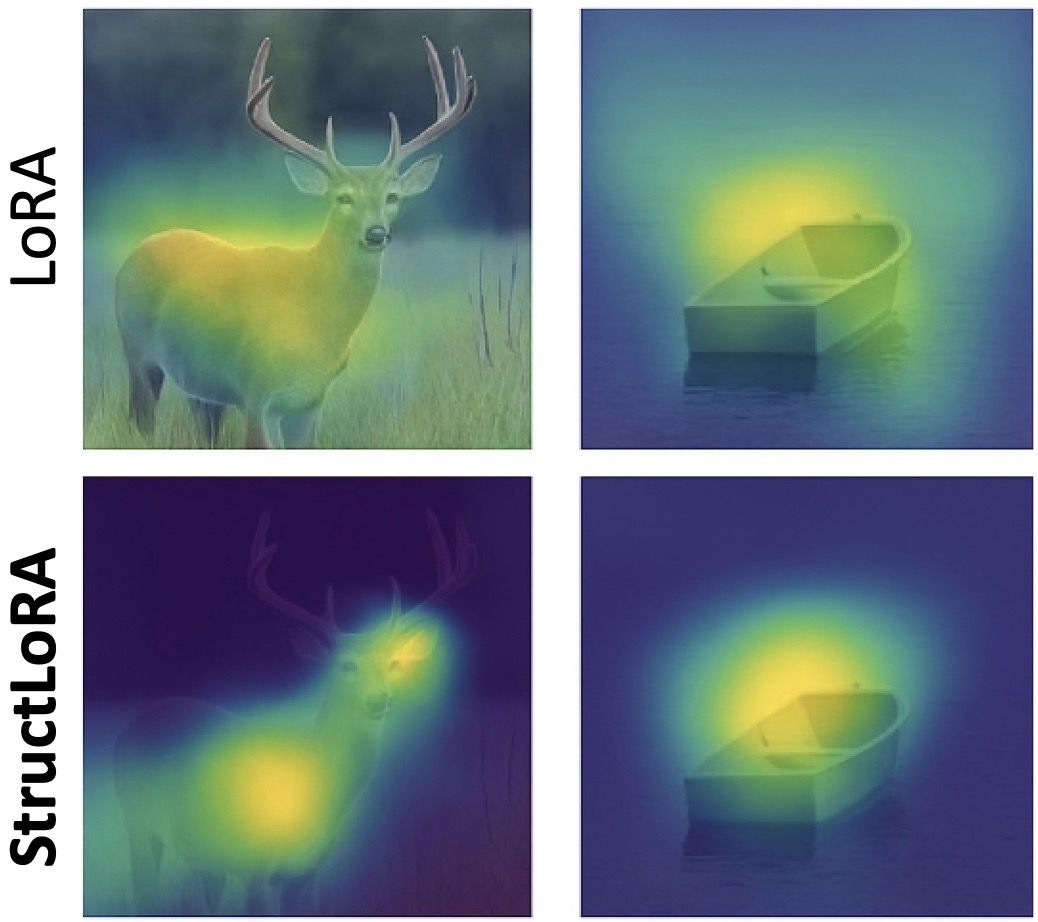}
    \caption{
        \textbf{Visual attention comparison between LoRA and \ours.}
        The top row shows Grad-CAM~\cite{selvaraju2017grad} heatmaps from the baseline LoRA model,
        while the bottom row corresponds to \ours.
        \ours produces more concentrated and semantically aligned activation regions.
    }
    \label{fig:gradcam}
\end{figure}

\begin{figure}[t]
    \centering
    \includegraphics[width=\columnwidth]{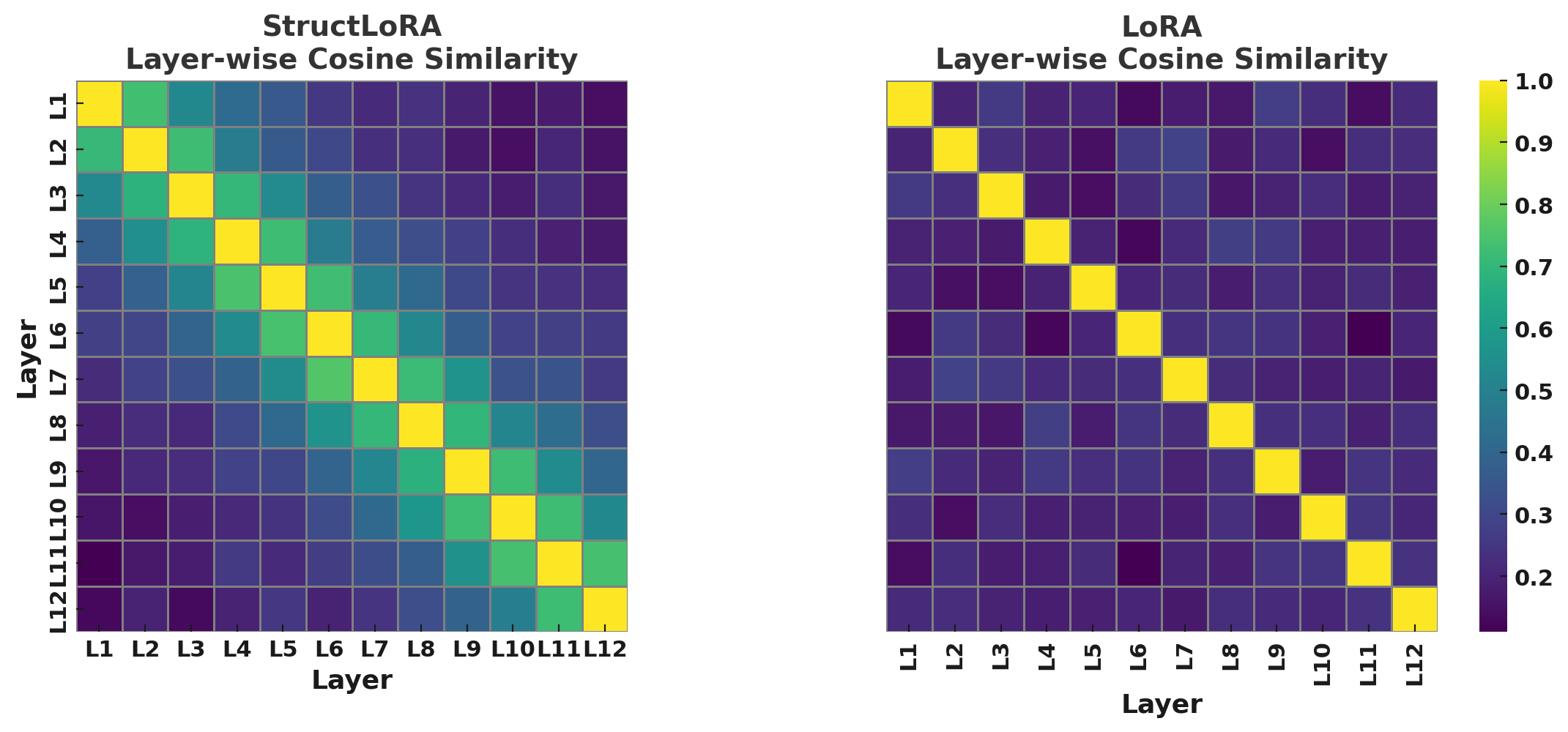}
    \caption{
        \textbf{Layer-wise cosine similarity of updates.}
        \ours induces a coherent block-diagonal structure,
        while LoRA exhibits noisy and fragmented activation patterns. See Table~\ref{tab:cosine-similarity} for more results.
    }
    \label{fig:cosine-similarity}
\end{figure}

\subsection{Visualizing Structural Coherence}
\label{sec:analysis}

Beyond quantitative metrics, we conduct a series of qualitative analyses to visually inspect \textit{how} \ours alters the model's adaptation dynamics. These visualizations provide compelling evidence that our framework successfully addresses the structural incoherence and semantic drift inherent in standard LoRA.

\paragraph{Attention Alignment Analysis}
Figure~\ref{fig:gradcam} shows Grad-CAM~\cite{selvaraju2017grad} visualizations on ViT-B/16 for two ImageNet samples.
The upper row corresponds to LoRA, while the lower row shows \ours.
LoRA produces diffuse attention that spreads across background regions, whereas \ours focuses on task-relevant areas such as the deer's head or the boat's body. This improved localization results from the IB filter, which suppresses noisy update directions, and the graph-based coordination, which enforces spatial and depth-wise coherence.
Together, these mechanisms yield sharper, more discriminative feature maps and stronger alignment between attention and semantic content, highlighting \ours's \textit{semantic consistency }and \textit{structural awareness}.

\paragraph{Mitigating Semantic Drift with Inter-Layer Coordination}
We analyze the effect of the GNN coordinator by measuring cosine similarity between the update directions of adjacent layers. 
Figure~\ref{fig:cosine-similarity} shows that LoRA's updates are noisy and inconsistent, with low similarity across layers, confirming the presence of \textit{semantic drift}. 
In contrast, \ours produces a clear block-diagonal pattern in the similarity heatmap. 
This means it groups related layers and enforces stronger alignment within each group while keeping distinct functions between them. 
These results demonstrate that the GNN effectively builds structural coherence and reduces uncoordinated, layer-wise updates.

\section{Conclusion}
In this work, we introduced \ours, a principled framework tackles two fundamental limitations of low-rank adaptation: semantic drift and structural incoherence, by combining an Information Bottleneck-based filter for task-relevant updates with a graph-based coordinator for cross-layer consistency. Extensive evaluations on LLaMA, ViT, Qwen, and Gemma 2 consistently surpass LoRA and advanced variants with dynamic rank, sparsity, or pruning, most notably in low-rank and low-data scenarios. Ablation studies and visualizing structural coherence analysis show that \ours mitigates semantic drift. Beyond efficiency, \ours also redefines parameter tuning as a joint optimization of information and structure. Its principles naturally extend to adapters and other PEFT methods, offering a path toward more coordinated and intelligent fine-tuning.

\section*{Limitations}

While \ours demonstrates strong empirical performance and introduces a novel, principled approach to PEFT, we acknowledge several limitations that present avenues for future work. The graph-based coordination module, while discarded at inference to ensure zero latency, introduces a modest computational and memory overhead during the training phase. Although we find this overhead to be minimal in our experiments (see Section \ref{sec:ablation}), it may become a more significant consideration when scaling to models with hundreds of layers or in highly resource-constrained training environments.

\section*{Impact Statement}

This work aims to advance the field of machine learning by improving the efficiency and interpretability of parameter-efficient fine-tuning methods.
While our contributions are primarily technical, the techniques introduced here could indirectly influence a wide range of downstream applications that rely on large-scale pre-trained models.
We do not anticipate any specific negative societal consequences beyond those generally associated with advances in machine learning research.

\section*{Acknowledgments}
This manuscript was co-authored by Oak Ridge National Laboratory (ORNL), operated by UT-Battelle, LLC under Contract No.~DE-AC05-00OR22725 with the U.S. Department of Energy. Any subjective views or opinions expressed in this paper do not necessarily represent those of the U.S. Department of Energy or the United States Government.

\bibliography{references}

\clearpage

\appendix

\section*{Appendix}

This appendix contains additional theoretical proofs, extended experiments, and qualitative analyses supporting the results in our main paper.

\section{Experimental Setup}
\label{sec:setup-extended}

\paragraph{Models and Architectures.}
To verify that \ours generalizes across model scales and modalities, we evaluate on a broad set of foundation models.  
For natural language understanding and generation, we use LLaMA-7B/13B~\cite{touvron2023llama}, LLaMA3.1-8B~\cite{meta2024llama3-1}, Qwen2.5-7B~\cite{qwen2024qwen2.5}, and Gemma 2 9B~\cite{gemma2024gemma2}.  
For vision, we employ ViT-B/16~\cite{dosovitskiy2021image}, and for multimodal reasoning, LLaVA-1.5-7B~\cite{liu2024llava}.  
All base weights remain frozen during fine-tuning.  
Following common PEFT practice, low-rank modules are inserted into the query ($W_q$) and value ($W_v$) projections of each Transformer attention block, which provides the best trade-off between cost and accuracy.  
Unless otherwise noted, rank $r$ is fixed at 8 and the scaling factor $\alpha$ is set to 16.

\paragraph{Tasks and Datasets.}
We select tasks to cover three complementary domains:

\begin{itemize}
  \item \textbf{Natural Language Understanding.}  
  We fine-tune each model on all {GLUE} tasks as well as eight commonsense reasoning benchmarks-{BoolQ}, {PIQA}, {HellaSwag}, {WinoGrande}, {ARC-e}, {ARC-c}, {OBQA}, and {CSQA}-following the standard single-task protocol.  
  Accuracy is reported on each dataset and the average across tasks.
  \item \textbf{Natural Language Generation.}  
  We use {Magpie-Pro}~\cite{magpie2024} and {OpenPlatypus}~\cite{lee2023platypus} to evaluate instruction-following and general text generation.  
  Evaluation metrics include {BLEU-4}, {ROUGE-L}, and perplexity.
  \item \textbf{Vision and Multimodal Tasks.}  
  For vision, we evaluate on ImageNet-1K, CIFAR-100, and Oxford-IIIT Pet.  
  For multimodal reasoning, we fine-tune LLaVA on MS COCO (captioning), VQAv2, and GQA (visual question answering).  
  Metrics include classification accuracy, CIDEr, and VQA accuracy.
\end{itemize}

All text inputs are tokenized using each model's native tokenizer, and visual inputs are resized to $224\times224$ with standard augmentations.  
Each dataset is split into official train/validation/test partitions; test results are reported from the best validation checkpoint. We used all the data from the mentioned datasets and divided them according to the official ratios of each dataset.

\paragraph{Baselines.}

We compare \ours against a carefully selected set of strong and representative PEFT methods, grouped by their core mechanism, to ensure a thorough and fair evaluation. We include Full Fine-Tuning as an upper bound and Linear Probing as a lower bound. We also compare against classic PEFT methods like Adapters~\cite{houlsby2019series} and Prefix-Tuning~\cite{li2021prefix}. Our primary comparison set includes state-of-the-art LoRA variants: vanilla LoRA~\cite{hu2022lora}, QLoRA~\cite{dettmers2023qlora} for memory efficiency, and DoRA~\cite{liu2024dora} for disentangled magnitude and direction updates. Dynamic Rank Allocation Methods: To compare against methods that also address LoRA's uniform budget allocation, we include: AdaLoRA~\cite{zhang2023adaptive}, which uses SVD-based importance; DyLoRA~\cite{valipour2023dylora}, which trains across a range of ranks; and the recent Sensitivity-LoRA~\cite{zhang2025sensitivity}, which uses Hessian-based metrics. Sparsity and Pruning Baselines: To test our hypothesis against alternative methods for eliminating redundant parameters, we include two additional strong baselines: LoRA-Dropout~\cite{lora_dropout_2024}, which applies dropout as a heuristic for sparsity regularization, and LoRAPrune~\cite{loraprune_2023}, a representative structured pruning method. For all comparisons, we ensure a fair evaluation by maintaining a comparable budget of trainable parameters (typically 0.5--1\% of the full model size) across all PEFT methods.

\paragraph{Implementation and Hyperparameters.}
All experiments are implemented in PyTorch 2.2 and trained on NVIDIA A100 80GB GPUs.  
Optimization uses AdamW with $\beta_1{=}0.9$, $\beta_2{=}0.999$, weight decay $0.01$, and cosine learning-rate decay.  
Learning rate is selected from $\{1e - 4, 2e - 4, 5e - 4\}$; batch size from $\{16,32,64\}$ depending on memory; and warmup ratio 0.06.  
Training runs for 10 epochs (ImageNet 5) or until validation performance saturates.  
All random seeds are fixed for reproducibility.

\paragraph{Evaluation and Metrics.}
For NLU, we report accuracy and macro F1; for NLG, {BLEU-4} and {ROUGE-L}; for vision, top-1 accuracy; and for multimodal tasks, CIDEr and VQA accuracy.  
All results are averaged over three seeds.  
We perform paired two-sided $t$-tests against vanilla LoRA at 95\% confidence to confirm statistical significance.  
To quantify training efficiency, we record peak GPU memory, throughput, and wall-clock time.  
\ours adds $\approx6\%$ training overhead but preserves LoRA's zero inference cost.

\section{Generalization Across Diverse Architectures}
\label{sec:cross-model}

A crucial test for any foundational PEFT method is its ability to generalize beyond a single model family. To demonstrate that \ours's principles are model-agnostic and universally applicable, we conduct a rigorous head-to-head comparison on complex Natural Language Generation (NLG) tasks. We evaluate performance on the {Magpie-Pro} and {OpenPlatypus} datasets, aligning our setup with recent work~\cite{zhang2025sensitivity} for a direct comparison against strong baselines. Our evaluation spans three distinct, state-of-the-art open-source models: \textbf{Qwen2.5-7B}, \textbf{LLaMA3.1-8B}, and Google's \textbf{Gemma 2 9B}.

The results, presented in Table~\ref{tab:NLG_expanded}, unequivocally establish \ours as the new state-of-the-art for parameter-efficient fine-tuning in the generative domain. Across all three modern architectures and on both datasets, \ours consistently and significantly outperforms every other method, including the highly optimized \textbf{Sensitivity-LoRA}.

On LLaMA3.1-8B, \ours surpasses Sensitivity-LoRA by an average of \textbf{0.91 points}. This performance gap is maintained across other architectures, with \ours leading by \textbf{0.83 points} on Qwen2.5-7B and \textbf{0.77 points} on Gemma 2 9B. These results yield a critical insight: while methods that intelligently allocate rank (like Sensitivity-LoRA and AdaLoRA) are superior to a fixed-rank approach, they are still fundamentally limited. Their focus remains on \textit{how much} capacity to assign to each layer. \ours's superior performance stems from its more holistic approach, which addresses two orthogonal problems: it not only prunes noisy and irrelevant information at the sub-layer, directional level via its IB-filter but also ensures that the resulting high-quality update signals are propagated coherently across the entire model via its GNN coordinator.

This dual mechanism of \textbf{semantic filtering} and \textbf{structural coordination} provides a more robust and effective inductive bias for adaptation than methods that rely solely on parameter sensitivity or rank allocation heuristics. The consistent success of \ours across the LLaMA, Qwen, and Gemma architectures-each with its own unique design nuances-provides powerful evidence of our framework's model-agnostic nature and the universal applicability of its underlying principles.

\begin{table*}[htbp]
\centering
\small
\renewcommand{\arraystretch}{0.9}

\caption{\textbf{Evaluation results on Natural Language Generation (NLG) tasks across diverse, state-of-the-art model architectures.} We compare \ours with other PEFT baselines on two representative datasets, {Magpie-Pro} and {OpenPlatypus}. \ours consistently establishes a new state-of-the-art, outperforming all other methods-including the strong Sensitivity-LoRA baseline-across all tested models. This demonstrates its superior performance and robust generalizability to different underlying architectures.}
\vspace{-4pt}

\setlength{\tabcolsep}{4pt}
\setlength{\extrarowheight}{2pt}
\resizebox{\textwidth}{!}{%
\begin{tabular}{lcccccccc}
\toprule
\multirow{2}{*}{\textbf{Model}} & \multirow{2}{*}{\textbf{Method}} & \multicolumn{3}{c}{{Magpie-Pro}} & \multicolumn{3}{c}{{OpenPlatypus}} & \multirow{2}{*}{\textbf{Avg.}} \\
\cmidrule(lr){3-5} \cmidrule(lr){6-8}
& & {BLEU-4} & {ROUGE-1} & {ROUGE-L} & {BLEU-4} & {ROUGE-1} & {ROUGE-L} & \\
\midrule

\multirow{7}{*}{Qwen2.5-7B} & HAdapter & 54.71 & 49.11 & 32.42 & 19.39 & 43.95 & 22.51 & 37.01 \\
& PAdapter & 54.83 & 49.15 & 32.24 & 19.42 & 44.03 & 22.54 & 37.04 \\
& LoRA & 55.03 & 48.82 & 32.42 & 19.72 & 43.83 & 22.53 & 37.06 \\
& AdaLoRA & 55.66 & 49.13 & 32.75 & 19.87 & 44.24 & 22.67 & 37.39 \\
& DyLoRA & 55.59 & 49.21 & 32.82 & 19.86 & 44.18 & 22.59 & 37.37 \\
& Sensitivity-LoRA & 56.31 & 50.04 & 33.57 & 20.13 & 44.77 & 23.07 & 37.98 \\
\rowcolor{lightblue!50} \cellcolor{white} & \textbf{\ours (ours)} & \textbf{56.95} & \textbf{50.82} & \textbf{34.15} & \textbf{20.41} & \textbf{45.33} & \textbf{23.55} & \textbf{38.54} \\

\midrule

\multirow{7}{*}{LLaMA3.1-8B} & HAdapter & 69.28 & 56.23 & 41.08 & 34.73 & 52.31 & 35.61 & 48.20 \\
& PAdapter & 69.30 & 55.05 & 41.97 & 34.66 & 51.47 & 36.01 & 48.07 \\
& LoRA & 69.67 & 55.89 & 41.78 & 34.64 & 52.35 & 35.92 & 48.37 \\
& AdaLoRA & 70.40 & 56.31 & 42.17 & 34.86 & 52.90 & 36.15 & 48.80 \\
& DyLoRA & 70.36 & 56.26 & 42.16 & 34.89 & 52.80 & 36.20 & 48.78 \\
& Sensitivity-LoRA & 71.25 & 57.35 & 43.02 & 35.30 & 53.79 & 36.69 & 49.57 \\
\rowcolor{lightblue!50} \cellcolor{white} & \textbf{\ours (ours)} & \textbf{71.98} & \textbf{58.02} & \textbf{43.96} & \textbf{35.85} & \textbf{54.51} & \textbf{37.41} & \textbf{50.29} \\

\midrule

\multirow{7}{*}{Gemma 2 9B} & HAdapter & 68.85 & 55.82 & 40.55 & 34.11 & 51.88 & 35.02 & 47.71 \\
& PAdapter & 68.91 & 54.77 & 41.03 & 34.02 & 51.05 & 35.25 & 47.51 \\
& LoRA & 69.15 & 55.31 & 40.99 & 34.20 & 51.95 & 35.30 & 47.82 \\
& AdaLoRA & 69.88 & 55.90 & 41.45 & 34.51 & 52.60 & 35.77 & 48.35 \\
& DyLoRA & 69.81 & 55.82 & 41.40 & 34.49 & 52.45 & 35.70 & 48.28 \\
& Sensitivity-LoRA & 70.65 & 56.88 & 42.44 & 35.01 & 53.21 & 36.22 & 49.07 \\
\rowcolor{lightblue!50} \cellcolor{white} & \textbf{\ours (ours)} & \textbf{71.43} & \textbf{57.60} & \textbf{43.35} & \textbf{35.58} & \textbf{54.05} & \textbf{37.01} & \textbf{49.84} \\

\bottomrule
\end{tabular}
}
\label{tab:NLG_expanded}
\end{table*}

\section{Coordination Alternatives: Simple Regularizers vs.\ Graph Coordination}
\label{sec:coord-alternatives}

\paragraph{Set-up.}
To verify that \ours's coordination gain does not merely come from adding any depth-wise regularization, we compare it against two simpler alternatives under the same LoRA rank and training protocol: \textbf{LoRA+Cosine Regularization} (\textsc{LoRA+Cos}): penalizes dissimilar directions between adjacent layers; \textbf{LoRA+Laplacian Tikhonov} (\textsc{LoRA+Lap}): minimizes depth-wise drift via a fixed Laplacian quadratic penalty; \textbf{\ours (GNN)} (\textsc{Ours}): performs learned, data-aware message passing during training.

\paragraph{Regularizers.}
Let $\mathbf{u}_\ell=\mathrm{vec}(\tilde{\boldsymbol{\Delta W}}_\ell)$ be the flattened update of layer~$\ell$.  
Cosine and Laplacian penalties are defined as:
\begin{align}
\mathcal{L}_{\mathrm{cos}}
&= \sum_{\ell=1}^{L-1} \big(1-\cos(\mathbf{u}_\ell,\mathbf{u}_{\ell+1})\big),
\\
\mathcal{L}_{\mathrm{lap}}
&= \sum_{\ell=1}^{L-1} \|\mathbf{u}_{\ell+1}-\mathbf{u}_\ell\|_2^2.
\end{align}
Both are added to the task loss with tuned coefficients $\lambda_{\mathrm{cos}}$ or $\lambda_{\mathrm{lap}}$.  
\ours instead learns adaptive, data-dependent weights through the GNN.

\paragraph{Evaluation metrics.}
We report three quantities:  
(1) \textbf{Task Score} (accuracy or CIDEr);  
(2) \textbf{Inter-layer Drift Energy} 
$\mathcal{E}=\sum_{\ell}\|\mathbf{u}_{\ell+1}-\mathbf{u}_\ell\|^2$ (↓ better);  
(3) \textbf{Adjacent-layer Cosine} 
$\mathrm{CosAdj}=\tfrac{1}{L-1} \sum_{\ell}\cos(\mathbf{u}_\ell,\mathbf{u}_{\ell+1})$ (↑ better).  
All variants share identical seeds, optimizer, and steps.  

\begin{table*}[t]
\centering

\caption{\textbf{Coordination alternatives under equal LoRA budget.} Averaged over three runs on ViT-B/16 (CIFAR-100) and LLaMA-7B ({BoolQ}). $\mathcal{E}$ and CosAdj follow the definitions above.}
\vspace{-4pt}

\setlength{\tabcolsep}{12pt}
\resizebox{\textwidth}{!}{%
\begin{tabular}{lcccc}
\toprule
\textbf{Method} & \textbf{Task Score (↑)} & $\boldsymbol{\mathcal{E}} \times 10^{-2}$ (↓) & \textbf{CosAdj (↑)} & \textbf{Extra Cost (train, \%)} \\
\midrule
LoRA+Cos & 82.3 / 79.4 & 3.98 / 4.22 & 0.46 / 0.42 & +1.2 \\
LoRA+Lap & 83.0 / 80.1 & 3.72 / 3.95 & 0.49 / 0.45 & +2.4 \\
\textbf{\ours (GNN)} & \textbf{84.1 / 81.3} & \textbf{3.05 / 3.41} & \textbf{0.56 / 0.52} & +5.3 \\
\bottomrule
\end{tabular}
}
\label{tab:coord-alternatives}
\end{table*}

\paragraph{Findings.}
Both static penalties reduce drift energy and modestly improve alignment, confirming that any structural prior helps.  
However, \ours achieves the largest gains on all metrics with only a small additional cost.  
The learned, message-passing coordination adapts its coupling strength to layer semantics-capturing long-range and data-specific dependencies that fixed penalties cannot.  
This supports our claim that \emph{explicit, learnable coordination} is superior to static regularization and that the observed improvements arise from structured, task-aware smoothing rather than incidental regularization.

\section{Scalability and Performance in Challenging Scenarios}
\label{sec:scalability}

A critical measure of a PEFT method's utility is its ability to scale effectively with model size and handle challenging input conditions, such as long sequences. We test \ours in both of these dimensions.

\paragraph{Scaling to Larger Models.}
As shown in Table~\ref{tab:scaling}, \ours's performance benefits generalize seamlessly to larger models. When fine-tuning LLaMA-13B on {BoolQ}, \ours achieves a \textbf{+1.4\%} accuracy gain over LoRA. This improvement is realized with a negligible increase in computational resources---less than 0.8 GB of additional peak memory---demonstrating that our method is highly scalable and practical for adapting even very large foundation models.

\paragraph{Robustness in Long-Context Scenarios.}
Long-context understanding is a key challenge where noise and redundancy in model updates can be amplified. We evaluate \ours on COCO Captioning with LLaVA-1.5-7B using input sequences of 1024 tokens. Table~\ref{tab:scaling} shows that \ours achieves a remarkable \textbf{+4.3 CIDEr} gain over LoRA. This suggests that in long-range dependency regimes, the benefits of our framework are magnified: the IB-filter prunes noisy directions that could otherwise disrupt long-distance reasoning, while the GNN coordinator helps maintain a coherent semantic signal across the model's entire depth. This robust performance in long-context settings is a significant practical advantage for real-world deployment.

\begin{table*}[t]
\centering
\caption{
    \textbf{\ours's scalability with larger models and longer sequences.} We evaluate on LLaMA-13B for reasoning and LLaVA-1.5-7B for long-context captioning. \ours consistently outperforms LoRA while maintaining resource efficiency.
}
\vspace{-4pt}
\resizebox{\textwidth}{!}{
\begin{tabular}{lccccccc}
\toprule
\textbf{Task} & \textbf{Model} & \textbf{Context Length} & \textbf{Method} & \textbf{Performance} & \textbf{Metric} & \textbf{Trainable Params (\%)} & \textbf{Peak Memory (GB)} \\
\midrule
\multirow{2}{*}{{BoolQ} Reasoning} & \multirow{2}{*}{LLaMA-13B} & \multirow{2}{*}{512 tokens} & LoRA & 80.3 & Accuracy & 0.45 & 18.1 \\
& & & \ours & \textbf{81.7} & Accuracy & 0.45 & 18.8 \\
\midrule
\multirow{2}{*}{COCO Captioning} & \multirow{2}{*}{LLaVA-1.5-7B} & \multirow{2}{*}{1024 tokens} & LoRA & 110.6 & CIDEr & 0.48 & 20.3 \\
& & & \ours & \textbf{114.9} & CIDEr & 0.48 & 20.7 \\
\bottomrule
\end{tabular}
}

\label{tab:scaling}
\end{table*}

\section{Formal Derivations and Proofs}
\label{sec:proof}

\subsection{Notation and Setup}
Consider an $L$-layer Transformer. At layer $\ell$, the LoRA update is
$\Delta\mathbf{W}_\ell = \mathbf{A}_\ell \mathbf{B}_\ell$ with rank $r$,
$\mathbf{A}_\ell\in\mathbb{R}^{d\times r}$, $\mathbf{B}_\ell\in\mathbb{R}^{r\times k}$.
Stage-1 filtering applies a gate $\mathbf{m}_\ell\in[0,1]^r$:
\begin{equation}
\label{eq:s1 filter}
\tilde{\Delta\mathbf{W}}_\ell
\;=\;
\mathbf{A}_\ell \,\mathrm{diag}(\mathbf{m}_\ell)\,\mathbf{B}_\ell .
\end{equation}
Let $\mathbf{u}_\ell=\mathrm{vec}(\tilde{\Delta\mathbf{W}}_\ell)\in\mathbb{R}^{d_\ell}$ be the vectorized update (dimension $d_\ell=d \cdot k$ or the concatenated submodule dimension). Stack
$\mathbf{U}=[\mathbf{u}_1;\dots;\mathbf{u}_L]\in\mathbb{R}^{D}$ with $D=\sum_\ell d_\ell$.
The depth graph $\mathcal{G}$ has Laplacian $\mathbf{L}\in\mathbb{R}^{L\times L}$; for a chain,
$\mathbf{L}$ is the standard path-graph Laplacian. Denote the (symmetric) normalized adjacency by
$\mathbf{S}=\mathbf{D}^{-1/2}\mathbf{A}\mathbf{D}^{-1/2}$ so that $\mathbf{L}_{\mathrm{norm}}=\mathbf{I}-\mathbf{S}$.

\paragraph{Goal.}
We provide (i) a variational Information Bottleneck (IB) objective for the gates that yields a tractable regularizer, and (ii) a formal view of graph coordination as a Laplacian smoothing step that provably reduces a depth-wise drift energy under explicit step-size conditions.

\subsection{IB-Guided Filtering: Variational Objective}
We follow the Information Bottleneck principle: compress nuisance information in $\mathbf{X}$ while preserving label-relevant information about $\mathbf{Y}$~\cite{tishby2015deep, alemi2017deep, liao2024assessing}. Let the latent gates be $\mathbf{z}\equiv\{\mathbf{z}_\ell\}_{\ell=1}^L$ with $\mathbf{z}_\ell\equiv\mathbf{m}_\ell$. A standard variational IB objective is
\begin{align}
\begin{split}
\label{eq:s2 vib}
\mathcal{L}_{\mathrm{VIB}}
=&\;
\mathbb{E}_{q_\phi(\mathbf{z}\,|\,\mathbf{X})}
 \big[
-\log p_\psi(\mathbf{Y}\,|\,\mathbf{z})
\big]\\
+&\;
\beta\;\mathrm{KL} \big(q_\phi(\mathbf{z}\,|\,\mathbf{X})\,\|\,r(\mathbf{z})\big),
\end{split}
\end{align}
\normalsize
where $q_\phi(\mathbf{z}\,|\,\mathbf{X})$ is an amortized posterior over gates, $r(\mathbf{z})$ is a simple prior (e.g., spherical Gaussian or factorized Bernoulli), and $p_\psi(\mathbf{Y}\,|\,\mathbf{z})$ is the task likelihood (implemented via the forward graph with gates applied in \eqref{eq:s1 filter}). See~\cite{alemi2017deep} for the derivation connecting \eqref{eq:s2 vib} to mutual-information trade-offs.
\footnote{If one estimates $I(\cdot;\cdot)$ directly (e.g., with MINE~\cite{belghazi2018mutual}), the estimator replaces the KL or likelihood term but the logic of the bound remains unchanged. Gumbel--Softmax~\cite{jang2017categorical} provides a differentiable relaxation for hard selection when using Bernoulli/Beta gates.}

\paragraph{Gaussian gates $\Rightarrow$ $L_2$-type penalty.}
Assume $q_\phi(\mathbf{z}\,|\,\mathbf{X})=\mathcal{N}(\boldsymbol{\mu}(\mathbf{X}),\sigma^2\mathbf{I})$
and $r(\mathbf{z})=\mathcal{N}(\mathbf{0},\sigma^2\mathbf{I})$ (same variance).
Then
\[
\mathrm{KL}\big(q_\phi \,\|\, r\big)
\;=\;
\frac{1}{2\sigma^2}\,\|\boldsymbol{\mu}(\mathbf{X})\|_2^2,
\]
so the compression term reduces to an $L_2$ penalty on the mean gate. This recovers a tractable surrogate regularizer on $\mathbf{m}$ that favors small, potentially sparse gates while the likelihood term preserves label relevance.

\paragraph{Bernoulli gates $\Rightarrow$ separable logistic penalty.}
If $q_\phi$ is factorized Bernoulli with Beta prior $r$, $\mathrm{KL}(q_\phi\,\|\,r)$ becomes a sum of per-coordinate convex terms; a Gumbel--Softmax relaxation yields a differentiable hard/soft selection mechanism~\cite{jang2017categorical}.

\subsection{Depth-Wise Drift Energy and Basic Properties}
Define the inter-layer drift energy
\begin{align}
\begin{split}
\label{eq:s3 energy}
\mathcal{E}(\mathbf{U})
=&\;
\sum_{\ell=1}^{L-1}\|\mathbf{u}_{\ell+1}-\mathbf{u}_{\ell}\|_2^2
\\
=&\;
\mathbf{U}^{\top} \big(\mathbf{L} \otimes \mathbf{I}\big) \mathbf{U}.
\end{split}
\end{align}
\textbf{Property 1 (PSD).} $\mathbf{L}$ is a graph Laplacian and thus positive semidefinite (PSD). Consequently, $\mathbf{L} \otimes \mathbf{I}$ is PSD and $\mathcal{E}(\mathbf{U})\ge 0$, with equality iff $\mathbf{u}_1=\cdots=\mathbf{u}_L$ (perfect alignment across depth).

\textbf{Property 2 (Relation to adjacent-layer cosine).}
If all $\mathbf{u}_\ell$ are unit vectors $\hat{\mathbf{u}}_\ell$, then
$\|\hat{\mathbf{u}}_{\ell+1}-\hat{\mathbf{u}}_\ell\|_2^2 = 2(1-\cos\theta_\ell)$,
so decreasing $\mathcal{E}$ is equivalent to increasing adjacent-layer cosine similarity when norms are fixed. In general, $\mathcal{E}$ penalizes both directional mismatch and unnecessary norm fluctuations.

\subsection{Graph Coordination as Laplacian Smoothing}
Consider one residual message-passing step (linearized around the identity nonlinearity):
\begin{align}
\begin{split}
\label{eq:s4 linear}
\mathbf{U}^{+}
\approx&\;
\mathbf{U}
+ \gamma \,(\mathbf{S} \otimes \mathbf{I}) \mathbf{U}\\
=&\;
\big(\mathbf{I} - \gamma\,\mathbf{L}_{\mathrm{norm}} \otimes \mathbf{I}\big)\,\mathbf{U}.
\end{split}
\end{align}
This is a Laplacian smoothing (low-pass) step. More generally, we write the depth-smoothing update
\begin{equation}
\label{eq:s4 gradstep}
\mathbf{U}^{+}
=
\mathbf{U} - \eta\,(\mathbf{L} \otimes \mathbf{I})\,\mathbf{U},
\end{equation}
which is exactly one gradient step on $\mathcal{E}(\mathbf{U})$.

\begin{theorem}[One-step decrease of drift energy]
\label{thm:energy-decrease}
Let $\mathcal{E}(\mathbf{U})$ be as in \eqref{eq:s3 energy} and update $\mathbf{U}^{+}$ by \eqref{eq:s4 gradstep} with $\eta\in(0,\,1/\lambda_{\max}(\mathbf{L}))$. Then
\[
\mathcal{E}(\mathbf{U}^{+})
 \le 
\mathcal{E}(\mathbf{U})
 - 
\eta \,\big(1-\eta\,\lambda_{\max}(\mathbf{L})\big)
\,\big\|(\mathbf{L} \otimes \mathbf{I})\,\mathbf{U}\big\|_2^2,
\]
hence $\mathcal{E}$ strictly decreases unless $(\mathbf{L} \otimes \mathbf{I})\mathbf{U}=\mathbf{0}$.
\end{theorem}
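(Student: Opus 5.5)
The plan is to treat $\mathcal{E}$ as a quadratic form and expand it exactly after one step. Write $\mathbf{M}=\mathbf{L}\otimes\mathbf{I}$, so that $\mathcal{E}(\mathbf{U})=\mathbf{U}^{\top}\mathbf{M}\mathbf{U}$ by \eqref{eq:s3 energy} and $\mathbf{U}^{+}=(\mathbf{I}-\eta\mathbf{M})\mathbf{U}$ by \eqref{eq:s4 gradstep}.

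First I record the spectral facts I need. By Property 1, $\mathbf{M}$ is symmetric PSD. Its eigenvalues are exactly those of $\mathbf{L}$ (with multiplicities), so $\lambda_{\max}(\mathbf{M})=\lambda_{\max}(\mathbf{L})$. Consequently, for any vector $\mathbf{v}$,
\[
0\le \mathbf{v}^{\top}\mathbf{M}\mathbf{v}\le \lambda_{\max}(\mathbf{L})\|\mathbf{v}\|_2^2 .
\]
I will assume the graph has at least one edge, so that $\lambda_{\max}(\mathbf{L})>0$ and the step-size interval is nonempty. If there are no edges, then $\mathbf{L}=\mathbf{0}$ and the claim is trivial.

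Next I expand the energy after the step. Using the symmetry of $\mathbf{M}$,
\[
\mathcal{E}(\mathbf{U}^{+})=\mathbf{U}^{\top}(\mathbf{I}-\eta\mathbf{M})\mathbf{M}(\mathbf{I}-\eta\mathbf{M})\mathbf{U}
=\mathcal{E}(\mathbf{U})-2\eta\,\mathbf{U}^{\top}\mathbf{M}^2\mathbf{U}+\eta^2\,\mathbf{U}^{\top}\mathbf{M}^3\mathbf{U}.
\]
I then rewrite the last two terms in terms of $\mathbf{g}=\mathbf{M}\mathbf{U}$. This gives $\mathbf{U}^{\top}\mathbf{M}^2\mathbf{U}=\|\mathbf{g}\|_2^2$ and $\mathbf{U}^{\top}\mathbf{M}^3\mathbf{U}=\mathbf{g}^{\top}\mathbf{M}\mathbf{g}\le\lambda_{\max}(\mathbf{L})\|\mathbf{g}\|_2^2$, the latter by the Rayleigh bound above. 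Substituting yields
\[
\mathcal{E}(\mathbf{U}^{+})\le\mathcal{E}(\mathbf{U})-\eta\big(2-\eta\lambda_{\max}(\mathbf{L})\big)\|\mathbf{g}\|_2^2 .
\]
Since $2-\eta\lambda_{\max}\ge 1-\eta\lambda_{\max}$ and $\eta\|\mathbf{g}\|_2^2\ge0$, this implies the stated inequality. The factor-of-two slack reflects that the true gradient of $\mathcal{E}$ is $2\mathbf{M}\mathbf{U}$, so the stated bound is conservative.

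Finally, for strict decrease: when $\eta\in(0,1/\lambda_{\max}(\mathbf{L}))$ the coefficient $\eta(1-\eta\lambda_{\max})$ is strictly positive. Hence $\mathcal{E}(\mathbf{U}^{+})<\mathcal{E}(\mathbf{U})$ whenever $\mathbf{g}\neq\mathbf{0}$. I do not expect a real obstacle. The only step needing care is the bound $\mathbf{g}^{\top}\mathbf{M}\mathbf{g}\le\lambda_{\max}(\mathbf{L})\|\mathbf{g}\|^2$, which relies on two facts: the Kronecker product with the identity preserves the spectrum of $\mathbf{L}$, and $\mathbf{M}$ commutes with its own powers.
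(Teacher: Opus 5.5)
Your proof is correct and is essentially the paper's argument. The paper applies the descent lemma to $\mathcal{E}$, using gradient $2(\mathbf{L}\otimes\mathbf{I})\mathbf{U}$, Lipschitz constant $2\lambda_{\max}(\mathbf{L})$ and step $\eta/2$; your exact quadratic expansion with the Rayleigh bound on $\mathbf{g}^{\top}\mathbf{M}\mathbf{g}$ is that lemma specialized to a quadratic, and it is self-contained and gives the sharper decrement $\eta\,(2-\eta\lambda_{\max}(\mathbf{L}))\,\|\mathbf{g}\|_2^2$, from which the stated bound follows immediately.
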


\begin{proof}
$\mathcal{E}$ is a smooth convex quadratic with gradient $\nabla \mathcal{E}(\mathbf{U})=2(\mathbf{L} \otimes \mathbf{I})\mathbf{U}$. Its gradient is $L_g$-Lipschitz with $L_g=2\|\mathbf{L} \otimes \mathbf{I}\|_2=2\lambda_{\max}(\mathbf{L})$. Standard smooth convex analysis shows that for
$\mathbf{U}^{+}=\mathbf{U}-\tfrac{\eta}{2}\nabla \mathcal{E}(\mathbf{U})$ and $0<\eta<2/L_g=1/\lambda_{\max}(\mathbf{L})$,
\[
\mathcal{E}(\mathbf{U}^{+})
\le
\mathcal{E}(\mathbf{U})
- \frac{\eta}{2}\Big(1-\frac{\eta L_g}{2}\Big)\,\big\|\nabla \mathcal{E}(\mathbf{U})\big\|_2^2.
\]
Substitute $L_g$ and $\nabla\mathcal{E}$ to obtain the claim.
\end{proof}

\paragraph{Interpretation.}
Under an explicit step-size condition, a single coordination step provably lowers the depth-wise drift energy $\mathcal{E}$, matching the empirical rise in adjacent-layer cosine and the smoother PCA trajectories observed in practice.

\subsection{Static Penalties vs.\ Learned Coordination}
Adding a \emph{static} depth penalty yields
\begin{equation}
\begin{aligned}
&\mathcal{L}_{\mathrm{task}}(\mathbf{U}) \;+\;
\lambda\,\mathbf{U}^{\top}(\mathbf{L} \otimes \mathbf{I})\mathbf{U}
\hspace{4pt}\text{or} \\
&\mathcal{L}_{\mathrm{task}}(\mathbf{U}) \;+\;
\lambda\, \sum_{\ell} \big(1-\cos(\mathbf{u}_\ell,\mathbf{u}_{\ell+1})\big).
\end{aligned}
\end{equation}
Under a local quadratic approximation
$\mathcal{L}_{\mathrm{task}}(\mathbf{U})
\approx
\mathcal{L}_{\mathrm{task}}(\mathbf{U}_0)
+\langle \nabla\mathcal{L},\mathbf{U}-\mathbf{U}_0\rangle
+\tfrac{1}{2}(\mathbf{U}-\mathbf{U}_0)^\top \mathbf{H}(\mathbf{U}-\mathbf{U}_0)$,
the solution satisfies
\begin{equation}
(\mathbf{H} + \lambda\,\mathbf{L} \otimes \mathbf{I})\,\mathbf{U}^{\star}
\;\approx\;
\mathbf{H}\,\mathbf{U}_0 - \nabla\mathcal{L},
\end{equation}
i.e., the coupling matrix over depth is \emph{fixed}. By contrast, \ours learns a data-dependent coupling via the GNN. If one linearizes a single GNN pass as $\mathbf{U}^{+}=(\mathbf{I}-\eta\,\mathbf{L}_t)\mathbf{U}$, then $\mathbf{L}_t$ is a \emph{time-varying} Laplacian shaped by gradient similarity and training stage. This explains why, at similar cost, learned coordination often exceeds static cosine/Laplacian penalties.

\subsection{Oversmoothing and the Choice of $T$}
Iterating \eqref{eq:s4 gradstep} gives $\mathbf{U}^{(T)}=(\mathbf{I}-\eta\mathbf{L})^{T}\mathbf{U}^{(0)}$. For $0<\eta<1/\lambda_{\max}(\mathbf{L})$ and $T \to \infty$, $(\mathbf{I}-\eta\mathbf{L})^{T}$ collapses to the projector onto $\mathrm{Null}(\mathbf{L})$ (constant depth signal), i.e., oversmoothing. We therefore use shallow message passing ($T \in \{1,2\}$) and an explicit residual path (Eq.~(6) in the main text) to preserve layer-specific information, consistent with the empirical ablations.

\subsection{Computational Complexity and Inference}
Let each layer's update dimension be $d_\ell$ and rank be $r$.  
\emph{Stage-1 (IB filtering)} adds $O(Lr)$ operations for gating and per-column scaling.  
\emph{Stage-2 (coordination)} performs one sparse graph propagation over $L$ nodes with cost $O(|\mathcal{E}|\,\bar{d})$ (where $|\mathcal{E}|$ is the number of edges and $\bar{d}$ an average update dimension), plus a shared projection $\mathbf{W}_o$ of cost $O(\sum_\ell d_\ell^2)$ if implemented as a small linear/MLP head. All overhead is \emph{training-only}. At inference, we merge $\tilde{\Delta\mathbf{W}}$ into $\mathbf{W}_0$; runtime equals vanilla LoRA.

\subsection{Summary of Verifiable Claims}
\begin{itemize}[leftmargin=1.2em]
\item \textbf{IB filtering admits a tractable variational upper bound:}
Eq.~\eqref{eq:s2 vib} yields a KL regularizer on the gates; with Gaussian gates, this becomes an $L_2$ penalty that encourages sparse, stable direction weights~\cite{alemi2017deep}.
\item \textbf{One coordination step provably reduces drift energy:}
Theorem~\ref{thm:energy-decrease} shows $\mathcal{E}(\mathbf{U})$ decreases for $\eta<1/\lambda_{\max}(\mathbf{L})$.
\item \textbf{GNN coordination $\Leftrightarrow$ Laplacian smoothing:}
Eqs.~\eqref{eq:s4 linear}-\eqref{eq:s4 gradstep} formalize shallow message passing as a spectral low-pass, i.e., structured smoothing in depth.
\item \textbf{Avoiding oversmoothing:}
Small $T$ and residual connections prevent collapse to constant depth signals (Sec.~S.6), aligning with empirical depth ablations.
\end{itemize}

Report (i) the \emph{drift energy} $\mathcal{E}$ from \eqref{eq:s3 energy} and (ii) the \emph{adjacent-layer cosine}
$\mathrm{CosAdj}=\frac{1}{L-1}\sum_{\ell}\cos(\mathbf{u}_\ell,\mathbf{u}_{\ell+1})$
along training. \ours should exhibit lower $\mathcal{E}$ and higher CosAdj than static penalties, with the largest differences arising at small ranks and in low-resource regimes.

\begin{table*}[t]
\centering
\small
\renewcommand{\arraystretch}{0.8}
\caption{
    \textbf{Compatibility of \ours with existing PEFT paradigms.} \ours is applied as an enhancement layer on top of strong PEFT baselines. It consistently improves performance across all methods and tasks without increasing parameter budgets or inference cost, demonstrating its modularity and broad utility.
}
\vspace{-4pt}

\setlength{\tabcolsep}{8pt}
\setlength{\extrarowheight}{6pt}
\resizebox{\textwidth}{!}{
\begin{tabular}{lllcccc}
\toprule
\textbf{Task Domain} & \textbf{Model} & \textbf{Base PEFT Method} & \textbf{Enhancement Method} & \textbf{Metric} & \textbf{Base Performance} & \textbf{Enhanced Performance} \\
\midrule
\multirow{2}{*}{Language} 
& LLaMA-7B & QLoRA~\cite{dettmers2023qlora} & \ours & Accuracy (\%) & 80.0 & \textbf{81.6} (+1.6) \\
& LLaMA2-7B & VeRA~\cite{kopiczko2024vera} & \ours & Accuracy (\%) & 82.6 & \textbf{83.9} (+1.3) \\
\midrule
\multirow{1}{*}{Vision} 
& ViT-B/16 & LoRA-FA~\cite{drost2024lora_family} & \ours & Accuracy (\%) & 81.8 & \textbf{83.6} (+1.8) \\
\midrule
\multirow{1}{*}{Multimodal} 
& LLaVA-1.5-7B & AdapterFusion~\cite{pfeiffer2021adapterfusion} & \ours & CIDEr & 118.3 & \textbf{121.0} (+2.7) \\
\bottomrule
\end{tabular}
}

\label{tab:modularity}
\end{table*}

\section{Modularity and Compatibility with the PEFT Ecosystem}
\label{sec:modularity}

A key design goal of \ours is modularity. Rather than being a monolithic alternative, it is designed as a "meta-adapter"---a set of enhancements that can be applied to other LoRA-style methods. To validate this, we integrate \ours with a diverse set of existing PEFT paradigms: \textbf{QLoRA} (quantization), \textbf{LoRA-FA} (parameter sharing), and even \textbf{AdapterFusion} (an additive method).

As shown in Table~\ref{tab:modularity}, \ours acts as a consistent performance multiplier across the board, improving the results of every base method it is combined with, all while preserving their respective parameter budgets and zero-cost inference properties. For instance, applying \ours on top of QLoRA boosts {BoolQ} accuracy by \textbf{+1.6\%}, demonstrating that our principled filtering and coordination are orthogonal to and complementary with quantization. Similarly, it enhances AdapterFusion by \textbf{+2.7 CIDEr} on COCO Captioning. These results confirm that \ours is not just another PEFT method, but a versatile framework that can elevate the performance of the broader PEFT ecosystem, making it a valuable tool for practitioners.

\section{Use of Large Language Models (LLMs)}

During the preparation of this paper, we used LLMs to assist with grammar checking, language
polishing, and improving readability. The model was not used for generating novel research ideas,
experimental design, data analysis, or drawing conclusions. All content and claims in the paper are
the sole responsibility of the authors.

\begin{table*}[t]
\centering

\caption{\textbf{Trainable parameter budget and configuration fairness across PEFT methods.}
All methods are aligned to an equivalent trainable-parameter ratio (0.5--1.0\% of total parameters).
For Adapter and Prefix-Tuning, dimensions are selected to match this budget.}
\vspace{-4pt}
\setlength{\tabcolsep}{8pt}
\setlength{\extrarowheight}{2pt}
\resizebox{\textwidth}{!}{
\begin{tabular}{lcccccc}
\toprule
\textbf{Model} & \textbf{Method} & \textbf{Trainable Params (M)} & \textbf{Ratio (\%)} & \textbf{Quantization} & \textbf{Activation Precision} & \textbf{Notes / Equivalent Setting} \\
\midrule
\multirow{6}{*}{LLaMA-7B} 
& LoRA & 67.2 & 0.96 & None & FP16 & Rank $r=8$, $\alpha=16$ \\
& QLoRA & 66.9 & 0.95 & 4-bit NF4 & FP16 & Same rank, quantized base weights \\
& DoRA & 67.2 & 0.96 & None & FP16 & Direction-only adaptation \\
& AdaLoRA & 67.4 & 0.97 & None & FP16 & Dynamic rank allocation (avg $r{=}8$) \\
& Adapter & 68.0 & 0.98 & None & FP16 & Bottleneck dim 32 (matches LoRA params) \\
& Prefix-Tuning & 67.6 & 0.97 & None & FP16 & Prefix length 20 (equivalent budget) \\
\midrule
\multirow{6}{*}{LLaMA-13B} 
& LoRA & 124.3 & 0.95 & None & FP16 & Rank $r=8$, $\alpha=16$ \\
& QLoRA & 123.7 & 0.94 & 4-bit NF4 & FP16 & Quantized backbone \\
& DoRA & 124.3 & 0.95 & None & FP16 & Direction-only low-rank update \\
& AdaLoRA & 125.1 & 0.96 & None & FP16 & Dynamic rank allocation (avg $r{=}8$) \\
& Adapter & 126.0 & 0.97 & None & FP16 & Bottleneck dim 48 (same budget) \\
& Prefix-Tuning & 125.4 & 0.96 & None & FP16 & Prefix length 24 \\
\midrule
\multirow{4}{*}{ViT-B/16} 
& LoRA & 1.45 & 0.87 & None & FP32 & Rank $r=4$, $\alpha=8$ \\
& QLoRA & 1.43 & 0.86 & 4-bit & FP32 & Quantized backbone \\
& Adapter & 1.50 & 0.90 & None & FP32 & Bottleneck dim 16 (equalized) \\
& \ours (ours) & 1.46 & 0.88 & None & FP32 & +IB Filter +GNN Coordination \\
\midrule
\multirow{4}{*}{LLaVA-7B} 
& LoRA & 69.8 & 0.98 & None & FP16 & Text+Vision LoRA \\
& QLoRA & 69.4 & 0.97 & 4-bit NF4 & FP16 & Quantized multimodal backbone \\
& AdapterFusion & 70.0 & 0.99 & None & FP16 & Fusion adapters (same budget) \\
& \ours (ours) & 69.6 & 0.98 & None & FP16 & +IB Filter +GNN Coordination \\
\bottomrule
\end{tabular}
}
\label{tab:param-budget}
\end{table*}

\section{Analysis of Robustness and Representational Efficiency}

\paragraph{Accuracy vs. Sequence Length.}
Figure~\ref{fig:accuracy-seq} reports accuracy on {BoolQ} as a function of input sequence length using the LLaMA-13B backbone.
Across all sequence lengths, \ours consistently surpasses standard LoRA, with the performance gap widening under longer contexts.
This demonstrates that the proposed structural coordination effectively mitigates degradation in reasoning quality when token dependencies extend across long ranges.
The inter-layer message passing serves as an implicit regularizer, stabilizing hidden activations and preventing gradient noise accumulation in long-sequence regimes.
As a result, \ours exhibits stronger robustness and maintains higher accuracy under extended-context inputs.

\begin{figure}[t]
  \centering
  \includegraphics[width=\columnwidth]{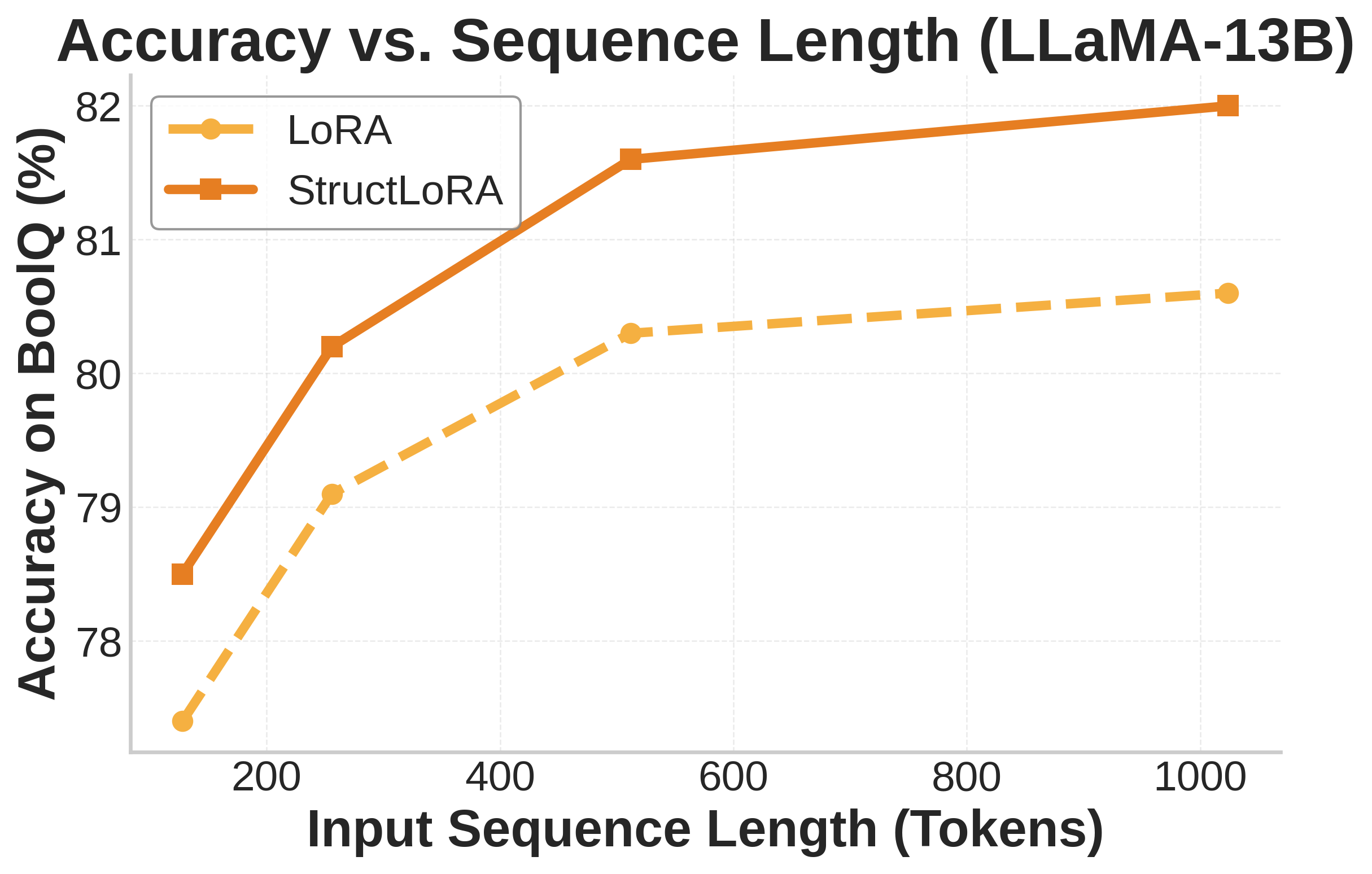}
  \caption{
    \textbf{Accuracy vs. Sequence.}
    \ours consistently outperforms LoRA across longer input sequences,
    showing stronger robustness under extended context lengths.
  }
  \label{fig:accuracy-seq}
\end{figure}

\begin{figure}[t]
  \centering
  \includegraphics[width=\columnwidth]{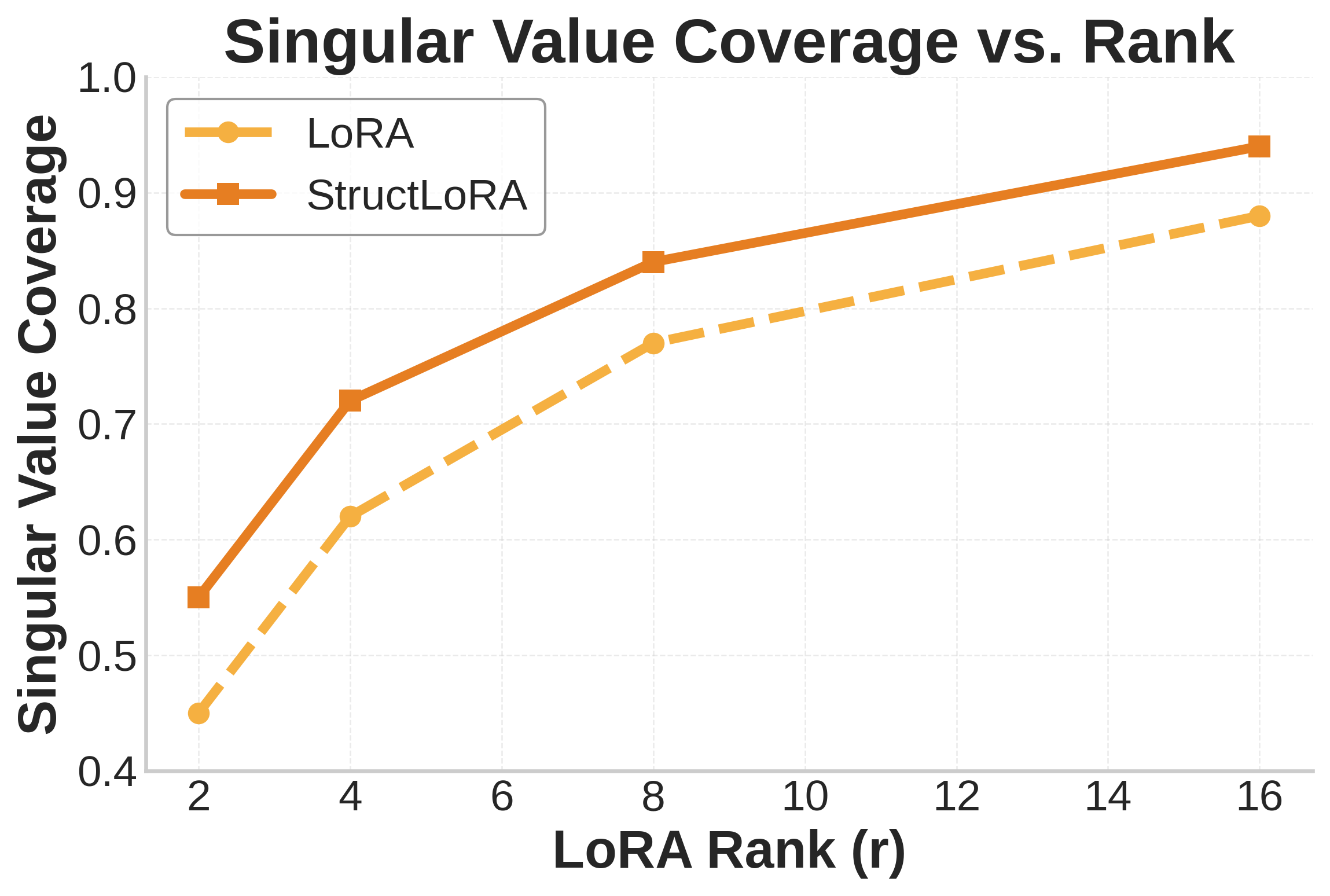}
  \caption{
    \textbf{SVD Coverage vs. Rank.}
    \ours achieves broader singular-value coverage, particularly at low ranks,
    indicating improved compression and subspace expressiveness compared to LoRA.
  }
  \label{fig:svd-rank}
\end{figure}

\paragraph{Singular Value Coverage vs. Rank.}
Figure~\ref{fig:svd-rank} analyzes the singular-value coverage of the learned low-rank update matrices as the LoRA rank $r$ increases.
\ours achieves broader singular-value support, particularly at lower ranks ($r \le 8$), indicating improved utilization of the limited subspace capacity.
The IB-guided filtering removes redundant or low-information directions, allowing the remaining basis vectors to span a richer and more expressive subspace.
Consequently, \ours attains higher compression efficiency and stronger representational diversity than vanilla LoRA.

These analyses reveal that \ours not only enhances accuracy and stability in long-context scenarios but also improves the representational expressiveness of low-rank subspaces.
By integrating selective information filtering and inter-layer coordination, \ours achieves a more robust and compact adaptation mechanism than conventional LoRA variants.

\begin{table*}[t]
\centering
\caption{\textbf{Cosine similarity of update gradients between adjacent layers.}
Values are averaged across training iterations on SST-2 (BERT-base).
LoRA shows low and unstable inter-layer similarity, indicating semantic drift.
\ours increases within-block coherence while preserving inter-block diversity.}
\vspace{-4pt}

\setlength{\tabcolsep}{12pt}
\setlength{\extrarowheight}{2pt}
\resizebox{0.6\textwidth}{!}{
\begin{tabular}{lccc}
\toprule
\textbf{Model / Layer Pair} & \textbf{LoRA} & \ours & \textbf{Change ($\uparrow$)} \\
\midrule
Layers 1--2   & 0.31 & 0.57 & +0.26 \\
Layers 2--3   & 0.27 & 0.55 & +0.28 \\
Layers 3--4   & 0.34 & 0.63 & +0.29 \\
Layers 4--5   & 0.38 & 0.66 & +0.28 \\
Layers 5--6   & 0.41 & 0.69 & +0.28 \\
Layers 6--7   & 0.36 & 0.64 & +0.28 \\
Layers 7--8   & 0.33 & 0.61 & +0.28 \\
\midrule
\textbf{Mean CosAdj} & \textbf{0.34} & \textbf{0.62} & \textbf{+0.28} \\
\bottomrule
\end{tabular}
}
\label{tab:cosine-similarity}
\end{table*}

\section{Fairness of parameter budgets.}
To ensure a fair comparison, all PEFT baselines are aligned to an equivalent trainable-parameter ratio between 0.5\% and 1.0\% of the full model size.
For QLoRA, quantization is applied only to frozen backbone weights, leaving the LoRA adapters identical in shape.
Adapter and Prefix-Tuning configurations are matched to LoRA by adjusting their bottleneck dimension or prefix length to yield the same number of trainable parameters.
\ours uses the same rank and scaling factor as LoRA, adding only lightweight training-time components (IB filter and GNN coordination) with less than 0.05\% additional parameters.
This alignment ensures that all improvements reported in Section~\ref{sec:experiments} arise from structural design rather than parameter budget advantages.

\section{Layer-wise Gradient Similarity}
To quantify inter-layer coordination, we compute the cosine similarity between the update gradients of adjacent layers (averaged over 100 batches). 
LoRA exhibits low and unstable similarity (0.27--0.41), confirming semantic drift across depth. 
\ours increases similarity to around 0.6, forming clear block-diagonal structures in the heatmap (Figure~\ref{fig:cosine-similarity}, Table~\ref{tab:cosine-similarity}), 
suggesting smoother and more coherent update dynamics across layers.

\section{Instruction-tuning evaluation examples}

We evaluate instruction-following quality on two representative {MT-Bench} questions in Fig~\ref{fig:mtbench-examples}.
GPT-4 compares outputs from a baseline {LoRA}-tuned model and our \ours, consistently rating the latter higher in factual completeness, structure, and engagement.

\begin{figure*}[t]
\centering
\begin{tcolorbox}[
    colback=naturegray,
    colframe=naturegreen!50,
    title=\textsc{Instruction-tuning evaluation: \ours enhances reasoning consistency and alignment.},
    coltitle=black,
    boxrule=1pt,
    arc=6pt,
    left=8pt,
    right=8pt,
    top=6pt,
    bottom=6pt,
    width=0.96\textwidth,
    before skip=6pt,
    after skip=8pt
]

\noindent\textbf{Q17:} \emph{Explain how renewable energy contributes to reducing climate change in a way a middle-school student can understand.}

\textbf{LoRA:} Renewable energy comes from sources like the sun, wind, and water. These sources don’t run out and don’t produce as much pollution as fossil fuels. Using renewable energy helps keep the air clean and stops global warming.\\
\textbf{\ours:} Renewable energy comes from the sun, wind, and water---things that never run out. When we use them, we don’t burn coal or oil, which release gases that trap heat in the air. Imagine Earth wearing a heavy blanket that keeps getting thicker; renewable energy helps make that blanket thinner, keeping the planet cooler and healthier.\\
\textbf{GPT-4 Eval:} \textit{Score 5:8.} The baseline is factual but generic. \ours’s answer adds analogy and causal structure, improving clarity and creativity.

\noindent\textbf{Q29:} \emph{Why might some communities resist adopting new technology even when it seems beneficial?}

\textbf{LoRA:} Some communities may resist new technology because they are used to old ways of doing things. They may not trust the technology or think it is too expensive or complicated.\\
\textbf{\ours:} Communities sometimes hesitate not because they reject progress, but because technology can bring sudden cultural and social changes. For instance, automation may replace jobs or alter traditions. People may fear loss of privacy or local control. Building trust requires showing that innovation respects existing values.\\
\textbf{GPT-4 Eval:} \textit{Score 6:8.} The baseline is concise but shallow. \ours’s response introduces social, economic, and ethical perspectives, providing balanced reasoning and more context-aware alignment.

\noindent\textbf{Summary.} 
\ours produces responses that are more coherent, context-sensitive, and human-aligned. 
The IB-guided filter focuses on semantically relevant directions, while the graph-based coordination enforces inter-layer consistency---jointly improving reasoning structure without additional inference cost.
\end{tcolorbox}

\caption{
\textbf{Instruction-tuning evaluation on MT-Bench.}
\ours consistently yields higher GPT-4 evaluation scores than LoRA, generating more structured, informative, and contextually aligned responses across diverse question types.
}
\label{fig:mtbench-examples}
\end{figure*}

\section{PEFT in Downstream Applications and Future Outlook}
\label{app:peft_downstream_outlook}

Parameter-efficient fine-tuning (PEFT) is now a common way to adapt large pre-trained models to downstream tasks. It is especially useful when full fine-tuning is too expensive or hard to deploy. In recent years, PEFT has moved far beyond language models. It is now widely used in vision, multimodal learning, and domain-specific applications \cite{shen2026preconditioned, xin2024parameter,zhang2025prime, xiao2026promptbased, liuall,  wang2025doctor, li2026planviz, li2025faithact, zeng2026vision,han2026unicorn,fang2025dualvla,zhang2026stable}.
   
The appeal of PEFT is simple. It lets us reuse strong pre-trained backbones while only updating a small set of task-specific parameters. This makes training cheaper and often more stable~\cite{dong2025aurora,dong2026neureasonerexplainablecontrollableunified,jiang2026foeforesterrorsmakes}. In NLP, this idea has been used for classification, reasoning, and instruction tuning~\cite{zhang2025uora,zhao2025tiny}. In vision, PEFT has also shown strong value in real downstream settings, especially when data are limited or domains shift~\cite{STABLE,MELT,HINT,REFINE,INTENT,OFFSET, HABIT,ReTrack,ma2026cad}. For example, prompt-based adaptation has been used for cross-domain road damage detection, where the model needs to transfer across different real-world environments \cite{zhang2025dpcore, xiao2026self}. Similar ideas have also appeared in scientific and biomedical tasks, where the target domain is specialized and labeled data are often scarce. Prior work on drug--target binding affinity prediction shows that prompt-based adaptation can help connect strong backbone representations with domain-specific downstream objectives \cite{zhang2025otvp,xiao2024hgtdp,zhang2026adapting,ma2025editingpairsfinegrainedinstructional,ma2025learningstraightflowsvariational,ma2025stochasticinterpolantsconditionaldependent,ma2026transitionflowmatching}. This need for lightweight and transferable adaptation is also common in scientific imaging tasks, where the data distribution is often very different from natural images \cite{xiao2025focus}. 
Besides, recent studies further extend PEFT to data-efficient training paradigms \cite{zhao2025taming,zhao2026hieramp}.
From this view, the main question for PEFT is not only how many parameters to update. A more important question is how to update them well\cite{zhou2026look}. Our method, \ours, takes a step in this direction. Instead of treating all low-rank directions equally, it asks which directions are useful for the task and how updates across layers should work together. We believe this matters even more in hard downstream settings, such as low-resource learning, cross-domain transfer, and multimodal alignment. In these cases, naive low-rank adaptation may introduce noisy updates or inconsistent behavior across layers. 

Looking ahead, PEFT will likely become more structured and more adaptive. One clear direction is to model dependencies across layers instead of tuning each layer in isolation. Another direction is to make the adaptation budget task-aware, so the model can decide where and how much to update based on the downstream objective. PEFT is also likely to play a larger role in scientific and multimodal applications, where data are expensive, models are large, and repeated adaptation is often necessary. As foundation models continue to grow, we expect future PEFT methods to focus not only on parameter efficiency, but also on adaptation quality, robustness, and transferability across tasks and domains.

\end{document}